\DeclareFontFamily{U}{mathb}{\hyphenchar\font45}
\DeclareFontShape{U}{mathb}{m}{n}{
      <5> <6> <7> <8> <9> <10> gen * mathb
      <10.95> mathb10 <12> <14.4> <17.28> <20.74> <24.88> mathb12
}{}
\DeclareSymbolFont{mathb}{U}{mathb}{m}{n}
\DeclareMathSymbol{\llcurly}{3}{mathb}{"CE}
\DeclareMathSymbol{\ggcurly}{3}{mathb}{"CF}
\newcommand{\nc}{\,\mid\!\sim\,}
\newtheorem*{theorem*}{Theorem}
\newtheorem*{conjecture*}{Conjecture}
\newtheorem{proposition}{Proposition}
\theoremstyle{definition}
\theoremstyle{definition}
\newtheorem{definition}{Definition}[section]
\theoremstyle{definition}
\theoremstyle{definition}
\theoremstyle{definition}
\begin{document}

\title{Belief Revision from Probability}
\author{Jeremy Goodman
\institute{School of Philosophy\\University of Southern California, USA}
\email{jeremy.goodman@usc.edu}
\and
Bernhard Salow
\institute{Faculty of Philosophy\\University of Oxford, UK}
\email{bernhard.salow@philosophy.ox.ac.uk}
}
\def\titlerunning{Belief Revision from Probability}
\def\authorrunning{J. Goodman \& B. Salow}

\maketitle

\begin{abstract}

In previous work (\cite{GoodSaloTARK, GoodSaloEpis}), we develop a question-relative, probabilistic account of belief. On this account, what someone believes relative to a given question is (i) closed under entailment, (ii) sufficiently probable given their evidence, and (iii) sensitive to the relative probabilities of the answers to the question.  Here we explore the implications of this account for the dynamics of belief. We show that the principles it validates are much weaker than those of orthodox theories of belief revision like AGM \cite{AGM85}, but still stronger than those valid according to the popular Lockean theory of belief \cite{Fole93}, which equates belief with high subjective probability. We then consider a restricted class of models, suitable for many but not all applications, and identify some further natural principles valid on this class. We conclude by arguing that the present framework compares favorably to the rival probabilistic accounts of belief developed by Leitgeb \cite{Leit14, Leit17} and Lin and Kelly \cite{LinKell12}.

\end{abstract}

\section{Probability Structures}

We will work with the following simplification of the models in \cite{GoodSaloTARK}:
\begin{definition} A \emph{probability structure} is a tuple $\langle S,\mathcal{E},Q,Pr,t\rangle$ such that:
\begin{enumerate}
    \item $S$ is a non-empty set (of \emph{states}),
    \item $\mathcal{E}\subseteq\mathcal{P}(S)\backslash \{\emptyset\}$ (the \emph{possible bodies of evidence}),
    \item $Q$ (the \emph{question}) is a partition of $S$,
    \item $Pr$ (the \emph{prior}) is a probability distribution over $S$, and
    \item $t\in [0,1]$ (the \emph{threshold})
\end{enumerate}
\end{definition}
\noindent Propositions are modeled as subsets of $S$, where $p$ is true in $s$ if and only if $s\in p$. We say that $E'\in \mathcal{E}$ is \textit{the result of discovering} $p$ in $E\in \mathcal{E}$ just in case $E'=E\cap p$; this will allow us to talk about how beliefs evolve in response to changes in one's evidence.

Which propositions an agent believes is a function of their evidence and is also given by a set of states, so that an agent with evidence $E$ believes $p$ if and only if $B(E)\subseteq p$. This ensures that their beliefs are closed under entailment, and thus already marks a departure from popular `Lockean' accounts of belief \cite{Fole93}, according to which one believes a proposition if and only if its probability exceeds a particular threshold. But it is compatible with the more plausible direction of Lockeanism, namely:
\begin{itemize}
    \item[] \textsc{threshold}: You believe $p$ only if $p$ is sufficiently probable given your evidence.
    \subitem If $B(E)\subseteq p$, then $Pr(p|E)\geq t$.
\end{itemize}

We can think of the members of the question $Q$ as its \textit{answers}; we write $[s]_Q$ for the member of $Q$ containing $s$. The proposal in \cite{GoodSaloTARK} then boils down to claiming that $s\in B(E)$ if and only if $s\in E$ and the answers to $Q$ that are more probable than $[s]_Q$ have total probability less than the threshold $t$. Writing $Pr_E$ for $Pr(\cdot |E)$, this can be formalized as follows: 

\begin{definition}
$B(E)=\{s\in E: Pr_E(\{s': Pr_E([s']_Q) > Pr_E([s]_Q)\})<t\}$
\end{definition}

\noindent This means that one believes as much as possible subject to two constraints: (i) \textsc{threshold}, and (ii) that the totality of one's beliefs corresponds to the conjunction of one's evidence with a disjunction of answers to $Q$ that includes any answer at least as probable (given one's evidence) as any other it includes. 
One notable attraction of this proposal is that what one believes corresponds to the discrete analogue of the highest posterior-density region typically used to define `credible intervals' from probability density functions in Bayesian statistics.  A logically significant feature of the proposal, to which we will return later, is that it involves not only local probability comparisons between different answers to $Q$, but also a global probability comparison between a collection of such answers and the threshold $t$.



\section{Principles and Results}

A core idea behind the orthodox AGM \cite{AGM85} theory of belief revision is that belief revisions are trivial whenever what you learn is compatible with your initial beliefs: you should simply add the discovery to your beliefs, draw out the logical consequences of these beliefs, and leave everything else unchanged. Here we will focus on five principles that encode various aspects of this idea. Exploring when and how these principles can fail will be a useful way of exploring the extent to which our account of belief requires departing from orthodoxy when it comes to belief dynamics. 
These principles are:\footnote{\label{fn:names}The $\Diamond$ indicates that the discovery is compatible with your initial beliefs, while the $\Box$ indicate that it is something you initially believe. $\Diamond -$ is often referred to as `preservation'; \cite{SheaFite19} call $\Box -$ `weak preservation' and $\Box R$ `very weak preservation'. If we interpret the non-monotonic consequence relation $p\nc q$ as saying that $B(p)\subseteq q$, then $\Diamond -$ corresponds to `rational monotony',  $\Box +$ to `cut', and $\Box -$ to `cautious monotony' in the standard terminology from \cite{KrauLehmMagi90}.}


\begin{itemize}
\item[$\Diamond -$] If you don't believe not-$p$ and then discover $p$, you shouldn't give up any beliefs.
\subitem If $B(E)\cap p \neq \emptyset $, then $B(E\cap p) \subseteq B(E)$.
\item[$\Diamond R$] If you don't believe not-$p$ and then discover $p$, you shouldn't reverse any of your beliefs (i.e.  go from believing something to believing its negation). 
\subitem If $B(E)\cap p \neq \emptyset $, then $B(E)\cap B(E\cap p) \neq \emptyset$.
\item[$\Box +$] If you believe $p$ and then discover $p$, you shouldn't form any new beliefs.
\subitem If $B(E)\subseteq p $, then $B(E) \subseteq B(E\cap p)$.
\item[$\Box -$] If you believe $p$ and then discover $p$, you shouldn't give up any beliefs.
\subitem If $B(E)\subseteq p $, then $B(E\cap p) \subseteq B(E)$
\item[$\Box R$] If you believe $p$ and then discover $p$, you shouldn't reverse any of your beliefs.
\subitem If $B(E)\subseteq p $, then  $B(E) \cap B(E\cap p) \neq \emptyset$.
\end{itemize}
These principles are not logically independent: the $\Diamond$ principles entail the corresponding $\Box$ principles, and the $+$ and $-$ principles each entail the corresponding $R$ principles. All of them are valid according to AGM. By contrast, only $\Box R$ is valid according to Lockean theories that equate believing a proposition with assigning it a sufficiently high probability (for some probability threshold less than 1), and it is valid only if this probability threshold is above $\frac{\sqrt{5}-1}{2}\approx .62$ (as discussed in \cite{SheaFite19}). 

The present account falls in between these extremes:
\begin{proposition}\label{prop:probval}
$\Box -$ and $\Box R$ are valid on the class of probability structures. 
\end{proposition}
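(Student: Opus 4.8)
The plan is to rephrase belief in terms of the cells of $Q$ and to reduce both principles to $\Box-$. For an answer $a \in Q$ I would write $\mathrm{Ab}_E(a) = \{s' \in S : Pr_E([s']_Q) > Pr_E(a)\}$ for the union of answers strictly more probable than $a$ given $E$, and set $\mathcal{A}_E = \{a \in Q : Pr_E(\mathrm{Ab}_E(a)) < t\}$ for the believed answers, so that $B(E) = E \cap \bigcup \mathcal{A}_E$ and membership of $s$ in $B(E)$ depends only on $[s]_Q$. The one structural fact to record first is that $\mathcal{A}_E$ is upward closed in $Pr_E$-probability: if $a \in \mathcal{A}_E$ and $Pr_E(a') \ge Pr_E(a)$, then $\mathrm{Ab}_E(a') \subseteq \mathrm{Ab}_E(a)$, whence $a' \in \mathcal{A}_E$.

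For $\Box-$, I would assume $B(E) \subseteq p$ and set $E' = E \cap p \subseteq E$. Given $s \in B(E')$ with answer $b = [s]_Q$, since $s \in E' \subseteq E$ it suffices to show $b \in \mathcal{A}_E$, and I would argue by contradiction from $b \notin \mathcal{A}_E$. Two observations do the work. First, every believed answer $a \in \mathcal{A}_E$ lies entirely in $p$ relative to $E$ — indeed $a \cap E \subseteq B(E) \subseteq p$ — so further conditioning on $p$ removes none of its mass, $Pr(a \cap E') = Pr(a \cap E)$, whereas every cell can only lose mass, $Pr(\,\cdot\, \cap E') \le Pr(\,\cdot\, \cap E)$. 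Second, by upward closure, $b \notin \mathcal{A}_E$ forces $Pr_E(a) > Pr_E(b)$ for every $a \in \mathcal{A}_E$.

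Combining these, each believed answer keeps all of its mass while $b$ can only lose mass, so every $a \in \mathcal{A}_E$ remains strictly more probable than $b$ under $E'$; hence $B(E) \subseteq \bigcup \mathcal{A}_E \subseteq \mathrm{Ab}_{E'}(b)$ and $Pr_{E'}(\mathrm{Ab}_{E'}(b)) \ge Pr_{E'}(B(E))$. The main obstacle is to bound this last quantity below by $t$, and this is where \textsc{threshold} enters. Since $B(E) \subseteq E' \subseteq E$, conditioning on the smaller set $E'$ can only raise the probability of $B(E)$, so $Pr_{E'}(B(E)) \ge Pr_E(B(E))$; and $Pr_E(B(E)) \ge t$ is \textsc{threshold}, which one checks by noting that $B(E)$ is exactly $E$ intersected with the cells strictly above the most probable unbelieved answer $b_0$, a set of $Pr_E$-probability $\ge t$ by the defining condition $b_0 \notin \mathcal{A}_E$ (with $B(E) = E$ trivially if there is no unbelieved answer). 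Thus $Pr_{E'}(\mathrm{Ab}_{E'}(b)) \ge t$, contradicting $s \in B(E')$; so $b \in \mathcal{A}_E$ and $B(E') \subseteq B(E)$, which is $\Box-$.

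Finally $\Box R$ follows from $\Box-$: under $B(E) \subseteq p$ we get $B(E) \cap B(E') = B(E')$, and $B(E') \ne \emptyset$ because, provided $t > 0$, any answer $a$ of maximal $Pr_{E'}$-probability has $\mathrm{Ab}_{E'}(a) = \emptyset$ and so is always believed (I would set aside the degenerate case $t = 0$, where every belief set is empty). I expect the only genuinely delicate points to be the mass-bookkeeping of the two observations and the \textsc{threshold} step; everything else is routine, with the sole extra care needed for infinite $Q$, where "most probable answer" should be read as "answer whose $Pr_E$-value is close enough to the supremum that the cells strictly above it carry probability below $t$", which exists because at most finitely many cells can have probability bounded away from $0$.
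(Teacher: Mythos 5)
Your proof is correct, and it assembles the same two ingredients as the paper's proof in a genuinely different way. The shared ingredients are your Observation 1 (cells of believed answers keep all their mass when conditioning on $p$, since $a\cap E\subseteq B(E)\subseteq p$, while every other cell can only lose mass) and the bound $Pr(B(E)\mid E\cap p)\ge Pr(B(E)\mid E)\ge t$. The paper assembles them top-down: it characterizes $B(E\cap p)$ as the \emph{minimal} $X\subseteq E\cap p$ that is closed under equally-or-more-probable answers and satisfies $Pr(X\mid E\cap p)\ge t$, checks that $B(E)$ satisfies both conditions relative to $E\cap p$, and concludes $B(E\cap p)\subseteq B(E)$ by minimality. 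You argue bottom-up instead, by contradiction on a single cell $b=[s]_Q$ with $s\in B(E\cap p)$ and $b\notin\mathcal{A}_E$: upward closure gives strict domination of $b$ by every believed cell, the domination survives conditioning by Observation 1, and the threshold bound then contradicts $s\in B(E\cap p)$. What your route buys: you never need the minimality characterization, which the paper asserts without argument (it requires knowing that the up-sets of the total preorder on answers form a chain), and you prove $Pr_E(B(E))\ge t$ explicitly rather than taking \textsc{threshold} as given, including the attainment issue for infinite $Q$. You also catch an edge case the paper glosses over: when $t=0$ every belief set is empty, so $\Box -$ holds vacuously but $\Box R$ as literally stated fails, meaning the inference from $\Box -$ to $\Box R$ tacitly presupposes $t>0$ (nonempty belief sets); flagging this is more careful than the original. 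One small imprecision: your closing gloss of ``most probable answer'' (``close enough to the supremum that the cells strictly above it carry probability below $t$'') fits the nonemptiness step for $\Box R$ but not the \textsc{threshold} step, where what you need is a maximal \emph{unbelieved} answer; the repair is the fact you already cite --- at most finitely many cells exceed any positive bound, so the relevant supremum is attained whenever it is positive, and the case where all unbelieved cells have probability $0$ is immediate.
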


\begin{proposition}\label{thm:probinval}
$\Diamond -$, $\Diamond R$, and $\Box +$ can all fail in  probability structures.
\end{proposition}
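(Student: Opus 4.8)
The plan is to establish each failure by exhibiting an explicit finite probability structure, since the claim is purely existential. The single lever driving all three counterexamples is that conditioning on $p$ renormalizes the conditional probabilities $Pr_E([s]_Q)$ of the answers: discovering $p$ can both reorder the answers by probability and alter how the cumulative mass of the strictly-more-probable answers sits relative to the threshold $t$. Because $\Diamond -$ entails $\Diamond R$ (as noted in the text), any structure witnessing a full reversal of belief already refutes both; so I would need only two structures in total, one for $\Diamond R$ (doubling as the witness for $\Diamond -$) and one for $\Box +$.

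For $\Diamond R$ and $\Diamond -$, I would take a question with three answers $a_1,a_2,a_3$, a prior assigning them decreasing mass (e.g. $0.5, 0.3, 0.2$), and a threshold just above the mass of the most probable answer (e.g. $t=0.6$), so that $B(S)=a_1\cup a_2$. I would then choose $p$ to retain all of $a_3$ but only a thin sliver of each of $a_1$ and $a_2$. Conditioning on $p$ makes $a_3$ by far the most probable answer, so that $B(S\cap p)=a_3$. Since $a_3$ is disjoint from $B(S)=a_1\cup a_2$, beliefs reverse: $B(S)\cap B(S\cap p)=\emptyset$ refutes $\Diamond R$, and $a_3\not\subseteq B(S)$ refutes $\Diamond -$. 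The slivers of $a_1,a_2$ kept inside $p$ ensure the antecedent $B(S)\cap p\neq\emptyset$ holds.

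For $\Box +$, I would reuse a three-answer structure with the same prior and threshold, again giving $B(S)=a_1\cup a_2$, but now set $p=a_1\cup a_2$, so that you already believe $p$ (i.e. $B(S)\subseteq p$). Conditioning on $p$ renormalizes the masses of $a_1$ and $a_2$ to $0.625$ and $0.375$; the mass strictly above $a_2$ is now $0.625\geq t$, so $a_2$ drops out and $B(S\cap p)=a_1$. Since $a_2\subseteq B(S)$ but $a_2\not\subseteq B(S\cap p)$, we have $B(S)\not\subseteq B(S\cap p)$, refuting $\Box +$: learning something you already believe forces a new belief. Note that $B(S)\cap B(S\cap p)=a_1\neq\emptyset$ here, consistent with the validity of $\Box R$ established in Proposition~\ref{prop:probval}.

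The only real care required is bookkeeping rather than new ideas: one must track the strict inequality in the definition of $B$ so that ties in answer-probability are handled correctly, confirm the relevant sets are nonempty members of $\mathcal{E}$, and check that the overlap condition (for the $\Diamond$ cases) or the containment condition (for $\Box +$) holds simultaneously with the reversal or strengthening. I expect the main point to be conceptual rather than an obstacle: recognizing that the renormalization inherent in conditioning is precisely what the \emph{global} comparison of cumulative answer-mass against the fixed threshold $t$ is sensitive to, which is the feature the text flags as logically significant.
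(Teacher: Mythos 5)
Your proposal is correct, and both of your finite structures check out: taking, say, cells $a_1=\{x_1,y_1\}$, $a_2=\{x_2,y_2\}$, $a_3=\{x_3\}$ with prior $.49,.01,.29,.01,.2$ and $p=\{x_1,x_2,x_3\}$ gives $B(S)=a_1\cup a_2$ and $B(S\cap p)=a_3$ as you describe, and your use of the entailment from $\Diamond -$ to $\Diamond R$ to make one structure refute both is legitimate. The route differs from the paper's mainly in decomposition and in what the countermodels additionally establish. The paper's proof just points to two scenario-based models in the main text: the infinite \textbf{Flipping for Heads} structure refutes $\Diamond -$ (via $p=\{s_2,s_3,\dots\}$) and, after adding $E'=\{s_1,\dots,s_7\}$, also $\Box +$; \textbf{Drawing a Card} refutes $\Diamond R$ by exactly your mechanism---$p$ retains all of one low-probability answer and only a sliver of the believed answer---so your first structure is essentially a three-cell variant of it, and your $\Box +$ structure is essentially the finite example the paper itself uses later in the proof of Proposition 5 ($S=\{a,b,c\}$, $Pr=.9,.09,.01$, $t=.9001$). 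The one thing your economy costs is this: because your $\Diamond -$ failure piggybacks on a $\Diamond R$ failure, it necessarily violates \textsc{orthogonality} (by Proposition 3, $\Diamond R$ cannot fail otherwise), whereas the paper's \textbf{Flipping for Heads} countermodels to $\Diamond -$ and $\Box +$ satisfy \textsc{orthogonality}---a fact the paper relies on in Section 3 when it claims those two principles fail even in \textsc{orthogonality}-satisfying structures. For Proposition 2 as stated, your two finite models fully suffice and are more elementary than the paper's infinite one; they just carry slightly less information for the paper's later purposes.
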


We will illustrate Proposition~\ref{thm:probinval} with two examples. Consider a much discussed thought experiment: 
\begin{quote} \textbf{Flipping for Heads}\\A coin flipper will flip a fair coin until it lands heads. 
\end{quote}
A natural model of this case is as follows:

\begin{quote}
\begin{tabular}{ll}
  $S=\{s_1,s_2,\ldots\}$   &  $\mathcal{E}=\{\{s_i,s_{i+1},s_{i+2},\dots\}: s_i\in S\}$\\
   $Q=\{\{s_i\}:s_i\in S\}$  & $Pr(\{s_i\})=\frac{1}{2^i}$\\
   $t=.99$ &
\end{tabular}
\end{quote}
Here $s_i$ is the state in which the coin lands heads on the $i$th flip, and 
$\{s_i,s_{i+1},s_{i+2},\dots\}$ is your evidence if you have just observed the coin land tails on the first $i-1$ flips. The question $Q$ is maximally fine-grained, and the probabilities match the known objective chances.

In this model, $B(\{s_i,s_{i+1},s_{i+2},\dots\})=\{s_i, s_{i+1},\ldots, s_{i+6}\}$: you always believe that the coin will land heads within the next seven flips. $\Diamond -$ is violated whenever you observe the coin land tails. For example, let $p=\{s_2,s_3,\dots\}$. Then $B(S)\cap p\neq\emptyset$, but $B(S\cap p)=\{s_2,\dots,s_8\}\not\subseteq\{s_1,\dots,s_7\}=B(S)$. We think this is exactly the right prediction.

To turn this into a counterexample to $\Box +$, we add new body of evidence $E'=\{s_1, s_2, \ldots, s_7\}$ to $\mathcal{E}$. Intuitively, we can think of this as the evidence you receive if you walk away from the experiment before the first flip, and are later told that the coin landed heads within the first seven flips. It is easy to verify that $B(E')=\{s_1,\dots,s_6\}$. So $B(S)\not \subseteq B(S\cap E')$, even though $B(S) \subseteq E'$. That this can happen should be unsurprising in a framework like ours in which agents have `inductive' beliefs that go beyond what is strictly entailed by their evidence: discovering something that you previously believed only inductively will strengthen your evidence, putting you in a position to draw further inductive conclusions.  

Counterexamples to $\Diamond R$ are subtler, for reasons we will explain in the next section. But here is one:
\begin{quote}\textbf{Drawing a Card}\newline You are holding a deck of cards, which is either a fair deck consisting of 52 different cards or a trick deck consisting of 52 Aces of Spades. Your background evidence makes it 90\% likely that the deck is fair. You draw a card at random; it is an Ace of Spades.
\end{quote}
Here is a possible model of the example:
\begin{quote}
\begin{tabular}{ll}
  $S=\{F_1,F_2,\ldots, F_{52}, T\}$   &  $\mathcal{E}=\{S, \{F_1\},\ldots,\{F_{51}\},\{F_{52},T\}\}$\\
   $Q=\{\{F_1,F_2,\ldots, F_{52}\},\{T\}\}$  & $Pr(\{F_i\})=\frac{.9}{52}\approx.017, Pr(\{T\})=.1$\\
   $t=.85$ &
\end{tabular}
\end{quote}

The states $F_i$ are all states in which the deck is fair; they are distinguished only by which card you will draw, with $F_{52}$ being the one where you draw the Ace of Spades. State $T$ is the state in which the deck is the trick deck (and you thus draw an Ace of Spades). Your evidence settles all and only what card you drew; so when you draw an Ace of Spades, it leaves open both that you did so by chance and that you did so because it is a trick deck. The question is simply whether the deck is fair. It is easy to see that, according to this model, you should initially believe only that the deck is fair. Your initial beliefs are thus compatible with it being fair and you drawing the Ace of Spades by chance. Yet when you discover that you drew an Ace of Spades, you should reverse your opinion and conclude that you're holding the trick deck, since $Pr(\{T\}|\{F_{52},T\})\approx \frac{.1}{.1+.017}\approx .855>t$.

Note that, in this model, your discovery is not a disjunction of answers to the question $Q$. If we changed the question to a more fine-grained one, so that your discovery was a disjunction of its answers, then the case would no longer yield a counterexample to $\Diamond R$. For example, relative to the question \textit{is the deck fair and will I draw an Ace of Spades} -- i.e. relative to $Q'=\{\{F_1,F_2,\ldots, F_{51}\},  \{F_{52}\},\{T\}\}$ -- you will initially believe that you won't draw an Ace of Spades, in which case your subsequent discovery isn't compatible with your initial beliefs. And relative to the question \textit{is the deck fair and what will I draw} -- i.e. relative to the maximally fine-grained $Q''=\{\{s\}:s\in S\}$ -- you will initially have no non-trivial beliefs, and in particular you won't start out believing that the deck is fair. In the next section, we will see that this is part of a more general pattern about counterexamples to $\Diamond R$. 

\section{Orthogonality}

In the previous section, we saw that some of the surprising belief dynamics in probability structures depended on discoveries that cross-cut the question. Notice that structures in which this cannot happen, because every member of $\mathcal{E}$ is the union of some subset of $Q$, satisfy the following constraint: 
\begin{itemize}
    \item[] \textsc{orthogonality}: $\frac{Pr([s]_Q)}{Pr([s']_Q)}=\frac{Pr([s]_Q|E)}{Pr([s']_Q|E)}$ for all $s,s'\in E\in\mathcal{E}$ s.t.  $Pr([s']_Q|E)>0$ 
\end{itemize}
This says that the only way that getting new evidence can change the relative probability of two answers to $Q$ is by completely ruling out one of those answers. While we can ensure \textsc{orthogonality} by making the question fine-grained enough to capture all possible discoveries, this isn't always necessary. For example, we could fine-grain the states and bodies of evidence in our model of \textbf{Flipping for Heads} to capture the fact that you discover where on the table the coin lands. The bodies of evidence in such a fine-grained model will cross-cut the question \emph{how many time will the coin be flipped}; but, plausibly, \textsc{orthogonality} will still hold for this question, since the added information about where the coin lands is probabilistically independent from how many times it will be flipped.

\textsc{orthogonality} is interesting because it leads to a stronger logic of belief revision. Firstly,
\begin{proposition}
$\Diamond R$ is valid on the class of probability structures satisfying \textsc{orthogonality}. 
\end{proposition}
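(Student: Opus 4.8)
The plan is to recast $B(E)$ in terms of which \emph{answers} are believed, and then to exploit the fact that \textsc{orthogonality} makes conditioning preserve the probabilistic ranking of answers. Since membership in $B(E)$ depends on a state $s$ only through its answer $[s]_Q$, I would say an answer $A\in Q$ is \emph{believed given} $E$ when $A\cap E\neq\emptyset$ and $Pr_E(\{s':Pr_E([s']_Q)>Pr_E(A)\})<t$, so that $B(E)=E\cap\bigcup\{A:A\text{ believed given }E\}$. Two elementary facts would be recorded first. (i) \emph{Monotonicity}: if $A$ is believed given $E$ and $Pr_E(A')>Pr_E(A)$, then $A'$ is believed given $E$, since the answers strictly more probable than $A'$ are a subset of those strictly more probable than $A$. (ii) \emph{Existence of a maximum}: the $Pr_E$-probabilities of the (disjoint) answers meeting $E$ sum to $1$, so only finitely many exceed any positive value, and hence a most probable answer exists; when $t>0$ it is automatically believed given $E$, as nothing lies strictly above it. The case $t=0$ is vacuous, since then $B(E)=\emptyset$ for every $E$ and the antecedent of $\Diamond R$ never holds; so I would assume $t>0$ throughout.

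Next I would extract the consequence of \textsc{orthogonality} that does the real work: conditioning on any $E\in\mathcal{E}$ rescales the probabilities of all answers with positive posterior by a single positive constant. Fixing a reference state in $E$ of positive posterior, \textsc{orthogonality} gives a constant $c_E>0$ with $Pr_E([s]_Q)=c_E\,Pr([s]_Q)$ for every $s\in E$ with $Pr([s]_Q)>0$ (and answers of prior probability $0$ have posterior $0$). Consequently, for any two answers both meeting $E$, their $Pr_E$-ranking agrees with their prior ranking, ties included. Applying this to both $E$ and $E'=E\cap p$ (both assumed to lie in $\mathcal{E}$, so that $B(E\cap p)$ is defined) yields the crucial transfer: \emph{for answers meeting $E'$, the $Pr_E$-ranking and the $Pr_{E'}$-ranking coincide}, both being the prior ranking restricted to those answers.

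The main argument then runs as follows. Assume $B(E)\cap p\neq\emptyset$ and fix $s\in B(E)\cap p$, so $s\in E'$ and $[s]_Q$ is believed given $E$ and meets $E'$. The family of answers that are believed given $E$ and meet $E'$ is thus nonempty, and by the summability argument it contains an element $A^*$ of maximal probability (in the common ranking). I would then show that no answer meeting $E'$ is strictly more probable than $A^*$: if some $A'$ met $E'$ with $Pr_E(A')>Pr_E(A^*)$, monotonicity would make $A'$ believed given $E$, hence put it in the family, contradicting maximality of $A^*$. By the transfer from \textsc{orthogonality}, $A^*$ is therefore also of maximal $Pr_{E'}$-probability among answers meeting $E'$; nothing lies strictly above it given $E'$, so $A^*$ is believed given $E'$. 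Finally any state $s^*\in A^*\cap E'\subseteq A^*\cap E$ lies in $B(E)$ (as $A^*$ is believed given $E$) and in $B(E')$ (as $A^*$ is believed given $E'$), giving $B(E)\cap B(E\cap p)\neq\emptyset$.

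I expect the delicate point to be the choice of witness in the third paragraph. It is tempting to take the single most probable answer meeting $E'$ (trivially believed given $E'$) or to take $[s]_Q$ itself, but neither need be believed given $E$: the former can be crowded out of $B(E)$ by more probable answers disjoint from $p$, and the latter may fail to be maximal. The correct object is the most probable answer that is \emph{simultaneously} believed given $E$ and compatible with $p$, and the heart of the proof is seeing that monotonicity together with the order-preservation supplied by \textsc{orthogonality} forces this answer to the top of the $E'$-ranking. A secondary care point is handling the existence of maximal answers, ties, and zero-probability answers, all of which the uniform-rescaling formulation manages cleanly.
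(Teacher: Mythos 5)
Your proof is correct and takes essentially the same approach as the paper's: your maximal believed answer $A^*$ compatible with $p$ is exactly the paper's witness $s\in B(E)\cap p$ with $Pr_E([s]_Q)$ maximal, your monotonicity fact is the upward-closure step behind the paper's final contradiction, and your rescaling-constant reading of \textsc{orthogonality} is the same order-preservation the paper invokes to move between $Pr_E$ and $Pr_{E\cap p}$. The differences are purely presentational—answer-level bookkeeping and explicit handling of $t=0$, ties, and zero-probability answers, which the paper leaves implicit.
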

\noindent Secondly, consider the following principle. It says (roughly) that if you're sure that, whatever you're about to discover, you won't believe a given proposition afterwards, then you already don't believe it:
\begin{itemize}
    \item[$\Pi -$] If $\Pi$ is a partition any member of which you could discover, then there is a $p\in \Pi$ such that you shouldn't give up any beliefs upon discovering $p$.
    \begin{itemize}
        \item[] If $\Pi\subseteq\mathcal{E}$ is a partition of $E$, then $B(E\cap p)\subseteq B(E)$ for some $p\in \Pi$.
    \end{itemize} 
    \end{itemize}
We then have the following result: 
\begin{proposition}
$\Pi -$ can fail in probability structures. But it is valid on the class of probability structures satisfying \textsc{orthogonality}.
\end{proposition}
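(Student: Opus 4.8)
The plan is to treat the two halves separately: a single explicit counterexample for the failure claim, and a probabilistic averaging argument for the validity claim.

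For the failure, the key realization is that a cell $p$ costs you a belief exactly when it makes \emph{live} (i.e.\ believed) some answer that was dead given $E$; and by the law of total probability the $Pr_E$-probability of any fixed answer is the $\Pi$-average of its probabilities across the cells, so no single answer can be promoted in every cell. Hence a counterexample must promote \emph{different} answers in different cells, which forces at least three answers and requires \textsc{orthogonality} to fail (the cells must reorder $Q$). I would take $Q=\{A,B,C\}$ and split $E$ into two cells $p_1,p_2\in\mathcal{E}$, choosing the prior and threshold so that only $A$ is live given $E$, and thus $B(E)=E\cap A$. I would then let $p_1$ consist of one $A$-state together with one $B$-state, and $p_2$ of one $A$-state together with one $C$-state, with the non-$A$ answer carrying most of the conditional probability in each cell (e.g.\ $Pr_{p_1}(B)=Pr_{p_2}(C)=.6$, giving $Pr_E=(.4,.3,.3)$ and $t=.4$). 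A direct check then shows $B(p_1)$ contains the $B$-state and $B(p_2)$ the $C$-state, neither of which lies in $B(E)=E\cap A$; so every cell loses a belief and $\Pi-$ fails.

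For validity under \textsc{orthogonality}, first I would record the structural description $B(E')=E'\cap\bigcup\mathcal{L}(E')$, where $\mathcal{L}(E')$ is the set of \emph{live} answers, those whose total $Pr_{E'}$-probability of strictly more probable answers falls below $t$. Two facts drop out. First, liveness is monotone in probability, so the dead answers form a down-set and their union $D$ satisfies $Pr_E(D)\le 1-t$ (the answers strictly above the heaviest dead answer are all live and already carry probability at least $t$). Second, calling a cell \emph{safe} when $B(E\cap p)\subseteq B(E)$, I would prove the characterization that $p$ is safe whenever $Pr_{E\cap p}(D)\le 1-t$. This is the step where \textsc{orthogonality} does the real work: it preserves the probability order among the answers meeting $p$, so the answers live given $E$ remain strictly above any dead answer after conditioning on $p$; hence if a dead answer were live given $p$, the dead region would have to carry more than $1-t$ of the conditional probability.

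With the characterization in hand, I would finish by a pigeonhole over the partition: since $\Pi$ partitions $E$, the law of total probability gives $Pr_E(D)=\sum_{p\in\Pi}Pr_E(p)\,Pr_{E\cap p}(D)$, a convex combination of the $Pr_{E\cap p}(D)$. As this average is at most $1-t$, some cell $p$ (necessarily of positive probability) achieves $Pr_{E\cap p}(D)\le 1-t$, and by the characterization that cell is safe, which witnesses $\Pi-$. I expect the main obstacle to be isolating and proving the safety characterization: pinning down that the \emph{only} way a dead answer can become live is for the dead region to gain conditional mass, and checking that this survives ties in the answer probabilities. Once that is secured, both the down-set bound and the averaging step are routine.
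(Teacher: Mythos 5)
Your proof is correct, and the validity half takes a genuinely different route from the paper's. The paper argues by contradiction: assuming $B(E\cap p_i)\not\subseteq B(E)$ for every cell $p_i\in\Pi$, it picks a witness $s_i\in B(E\cap p_i)\setminus B(E)$ in each cell, then an $s_k$ maximizing $Pr_E([s_i]_Q)$ across cells, uses \textsc{orthogonality} to convert each cellwise bound ``mass of answers above $[s_i]_Q$ is $<t$ given $E\cap p_i$'' into a bound on the $Pr_E$-comparison class of $[s_k]_Q$ within that cell, and applies the law of total probability to conclude $s_k\in B(E)$, a contradiction. You instead prove a direct, contrapositive-free safety lemma: with $D$ the union of dead answers, \textsc{orthogonality} keeps every $E$-live answer strictly above every $E$-dead answer with positive conditional mass (live answers strictly outweigh dead ones, since deadness is downward closed), so any cell with $Pr_{E\cap p}(D)\leq 1-t$ is safe; combined with your down-set bound $Pr_E(D)\leq 1-t$ (the strict upper set of the heaviest dead answer is all live and has mass $\geq t$) and averaging $Pr_{E\cap p}(D)$ over $\Pi$, you exhibit a safe cell explicitly. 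Both arguments run on the same two engines---order preservation under \textsc{orthogonality} and total probability over $\Pi$---but your region-based decomposition buys a reusable quantitative criterion for \emph{which} cells are safe, while the paper's state-based contradiction is shorter and avoids the live/dead machinery. Your counterexample (four states, three answers, $t=.4$) differs from the paper's (six states, four answers, uniform prior, $t=.65$) but checks out: $B(E)=E\cap A$ while $B(p_1)$ and $B(p_2)$ are the singletons of the $B$-state and $C$-state respectively, so neither cell is safe. One shared loose end, at exactly the paper's own level of rigor: when $Q$ or $\Pi$ is infinite, the existence of your ``heaviest dead answer'' (and of the paper's maximal $s_i$ and $s_k$) needs the observation that only finitely many answers exceed any positive probability bound, plus the usual conventions for null answers; this is not a real gap in either argument.
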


It is also worth noting that $\Diamond -$ and $\Box +$ can still fail in structures satisfying \textsc{orthogonality}. In particular, \textsc{orthogonality} holds in the structures we used in the last section to argue that \textbf{Flipping for Heads} yields counterexamples to $\Diamond -$ and $\Box +$.

In our view, a good deal of ordinary talk about what people believe is well-modelled by structures satisfying \textsc{orthogonality}. This is because we think that the question $Q$, to which attributions of belief are implicitly relativized, typically coincides with the question under discussion in the conversational context in which those attributions are made. Moreover, when a discovery is salient, it is natural to consider a question that is sufficiently fine-grained to capture all the aspects of this discovery that are relevant to its answers. Counterexamples to \textsc{orthogonality} (and thus to $\Diamond R$ and $\Pi -$) therefore tend to be `elusive' in Lewis's \cite{LEwi96} sense: attending to these cases often changes the context in such a way that they can no longer be described as counterexamples. 

That being said, we do not think that \textsc{orthogonality} is plausible as a general constraint. This is because, very often, the only way to ensure \textsc{orthogonality} is to adopt a very fine-grained question; and, often, such fine-grained questions make overly skeptical predictions about what we can believe. Consider, for example, the following case:
\begin{quote}
    \textbf{One Hundred Flips}\\
    You will flip a fair coin 100 times and watch how it lands each time. 
\end{quote}
There are natural contexts in which you can be correctly described as initially believing that the coin will not land heads more than 90 times. Our theory predicts this for various natural questions, even for very high thresholds $t$ -- for example, the polar question \textit{will the coin land heads more than 90 times} or the slightly more fine-grained question \textit{how often will the coin lands heads}. But neither of these questions satisfies \textsc{orthogonality}. For example, discovering that coin lands tails on the first flip will favor `no' over `yes' for the first question, and `51' over `49' for the second question, without ruling out any of these answers. In fact, the only natural question that satisfies \textsc{orthogonality} is the maximally fine-grained question \textit{what will the exact sequence of heads and tails be}. But all answers to this question are equally likely, and so this question prevents you from having any non-trivial beliefs about what will happen.

We conclude that \textsc{orthogonality} should be rejected as a general constraint, even if it will often hold when we are considering a particular case with a limited range of discoveries. $\Diamond R$ and $\Pi -$ are thus not fully general principles of belief revision; but counterexamples are likely to be difficult to pin down. 

\textsc{orthogonality} is also a fruitful principle in that it helps to facilitate comparisons between our framework and other probabilistic theories of belief. Let us now turn to these.

\section{Comparisons}

In this section, we consider two influential probabilistic accounts from the literature, and compare them with our own account. The first can be seen as a version of our account with an additional constraint imposed on probability structures, and validates $\Diamond -$ but not $\Box+$; the second can be seen as defining belief from probability structures in a related but different way, and validates $\Box+$ but not $\Diamond -$.

\subsection{A Stability Theory}

The first theory we want to consider is inspired by Leitgeb's \emph{stability theory of belief} \cite{Leit14, Leit17}. The guiding idea behind this theory is a probabilistic analogue of $\Diamond-$ that Leitgeb calls the \emph{Humean thesis}. However, despite the `stablity' moniker, the constraints imposed by Leitgeb's theory are \emph{synchronic} ones relating probabilities, partitions, and thresholds at a single time. So both to facilitate comparison with our framework, and to be (in our view) more faithful to its motivating idea, we will consider a strengthening of Leitgeb's theory according to which the requirements it imposes on one's beliefs prior to a discovery continue to hold after one has made that discovery. We can then interpret the view as proposing the following constraint on probability structures:\footnote{Our \textsc{stability} strengthens Leitgeb's theory in two ways: first, by identifying the threshold that characterizes the minimal probability of anything one believes with the threshold in terms of which stability is defined, and, second, by not allowing this threshold to be different for different possible bodies of evidence. It also departs from his formulation in quantifying over $\mathcal{E}$ rather than $\{\bigcup Y: Y\subseteq Q\}$; however, we read him as identifying $\mathcal{E}$ with $\{\bigcup Y: \emptyset\neq Y\subseteq Q\}$, so this is not a substantive departure.} 

\begin{itemize}
    \item[] \textsc{stability}: For all $E\in\mathcal{E}$ and $X\subseteq Q$, if $Pr(\bigcup X)\geq t$ and $E\cap\bigcup X \neq \emptyset$, then $Pr(\bigcup X|E)\geq t$. 
\end{itemize}
We then have the following results:

\begin{proposition} $\Diamond -$ is valid in probability structures satisfying \textsc{stability} and \textsc{orthogonality}. But $\Box+$ can still fail; and $\Diamond -$ can fail in structures satisfying \textsc{stability} but not \textsc{orthogonality}.
\end{proposition}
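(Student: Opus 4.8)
The plan is to prove the three claims separately, all via a reformulation of $B$ under \textsc{orthogonality}. Write $Q_E=\{[s]_Q:s\in E\}$ for the answers compatible with $E$ and $M_E=\bigcup Q_E$, and call $U\subseteq Q_E$ an \emph{upper set} if $a\in U$ and $Pr(a')\geq Pr(a)$ force $a'\in U$. First I would record two consequences of \textsc{orthogonality}: conditioning on $E$ rescales the surviving answers uniformly, so $Pr_E(a)=Pr(a)/Pr(M_E)$ for $a\in Q_E$ (the order of answers is conditioning-invariant); and hence $B(E)=E\cap\bigcup\beta(E)$, where $\bigcup\beta(E)$ is the $\subseteq$-least upper set of $Q_E$ with $Pr_E$-measure $\geq t$ (the discrete highest-density region). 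Since $E'=E\cap p$ gives $Q_{E'}\subseteq Q_E$ with the same induced order, $\Diamond-$ reduces to a single inequality: if $Pr_{E'}(\bigcup\beta(E))\geq t$, then the answers of $\beta(E)$ surviving in $E'$ form an upper set of $Q_{E'}$ of $Pr_{E'}$-measure $\geq t$, so minimality gives $\beta(E')\subseteq\beta(E)$, whence $B(E')=E'\cap\bigcup\beta(E')\subseteq E\cap\bigcup\beta(E)=B(E)$.

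The main obstacle is establishing $Pr_{E'}(\bigcup\beta(E))\geq t$ from \textsc{stability}, and the difficulty is that \textsc{stability} constrains the \emph{prior}, whereas the belief region can be prior-improbable (orthogonality only yields $Pr(\bigcup\beta(E))\geq t\cdot Pr(M_E)$, which may be far below $t$). The key step is to apply \textsc{stability} not to $\bigcup\beta(E)$ but to the \emph{padded} set $X=\beta(E)\cup(Q\setminus Q_E)$, the believed answers together with every answer $E$ rules out. Then $\bigcup X=\bigcup\beta(E)\cup(S\setminus M_E)$ has prior probability $Pr(\bigcup\beta(E))+(1-Pr(M_E))\geq t\,Pr(M_E)+1-Pr(M_E)=1-(1-t)Pr(M_E)\geq t$. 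The hypothesis $B(E)\cap p\neq\emptyset$ gives $E'\cap\bigcup\beta(E)\neq\emptyset$, so $E'$ is compatible with $\bigcup X$ and \textsc{stability} yields $Pr_{E'}(\bigcup X)\geq t$. Finally, since $E'\subseteq E\subseteq M_E$, conditioning on $E'$ annihilates the padding $S\setminus M_E$, so $Pr_{E'}(\bigcup X)=Pr_{E'}(\bigcup\beta(E))$, which is exactly the inequality needed. Finding this padding is the whole idea; the rest is bookkeeping.

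For ``$\Box+$ can still fail'' I would exhibit a small structure in which conditioning on a believed proposition discards only low-probability answers and thereby sharpens belief. Take $Q=\{a_1,a_2,a_3,a_4\}$ with $Pr=(0.6,0.2,0.1,0.1)$, $t=0.7$, and $\mathcal{E}=\{S,\,a_1\cup a_2\}$. Every member of $\mathcal{E}$ is a union of answers, so \textsc{orthogonality} holds; \textsc{stability} holds because any answer-union of prior probability $\geq t$ must contain $a_1$, which alone has conditional probability $0.6/0.8=0.75\geq t$ given $a_1\cup a_2$. One checks $B(S)=a_1\cup a_2\subseteq p:=a_1\cup a_2$, yet $B(S\cap p)=B(a_1\cup a_2)=a_1$, so $B(S)\not\subseteq B(S\cap p)$.

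For ``$\Diamond-$ can fail without \textsc{orthogonality}'' I would construct a structure in which conditioning \emph{reorders} the answers. Here \textsc{stability} forbids the believed region from dropping below $t$, so the newly believed answer must climb \emph{past} a previously believed one rather than the region merely losing mass; I engineer this by splitting each answer into an in-$p$ and an out-of-$p$ part so that conditioning on $p$ promotes a low answer. Concretely, let $Q=\{a,b,c\}$, $t=0.7$, with prior answer-masses $(0.5,0.35,0.15)$ and the in-$p$ parts chosen so that $Pr_p(a),Pr_p(b),Pr_p(c)=(0.6,0.1,0.3)$ (e.g.\ place all of $c$'s mass in its in-$p$ part), and $\mathcal{E}=\{S,p\}$. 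Then \textsc{orthogonality} fails (the $a\!:\!c$ odds change from $10\!:\!3$ to $2\!:\!1$); \textsc{stability} holds, since the only prior-$\geq t$ union besides $S$ is $a\cup b$, whose conditional probability given $p$ is exactly $0.6+0.1=0.7$; and $\beta(S)=\{a,b\}$ while $\beta(p)=\{a,c\}$, so the state realizing $c$ inside $p$ lies in $B(p)\setminus B(S)$ even though $B(S)\cap p\neq\emptyset$. The only thing to verify with care is that these posteriors are jointly realizable by a single prior while keeping $a\cup b$ stable, which the above choice of out-of-$p$ masses secures.
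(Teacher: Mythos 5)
Your proof is correct, and all three parts check out. The two countermodels differ in detail from the paper's but do the same job: for $\Box+$ the paper uses a three-state structure with $Pr=(.9,.09,.01)$ and $t=.9001$, while your four-answer model with masses $(.6,.2,.1,.1)$, $t=.7$, $\mathcal{E}=\{S,a_1\cup a_2\}$ verifies correctly ($B(S)=a_1\cup a_2$, $B(a_1\cup a_2)=a_1$, and \textsc{stability} holds since any answer-union of prior mass $\geq .7$ contains $a_1$, whose conditional mass is $.75$); for $\Diamond-$ the paper uses an $\epsilon$-parametrized three-answer model, while yours is realizable with $Pr(p)=.5$, keeps $a\cup b$ exactly at the threshold ($Pr(a\cup b\mid p)=.7$), and indeed has $\beta(p)=\{a,c\}\not\subseteq\{a,b\}=\beta(S)$ with $B(S)\cap p\neq\emptyset$. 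On the validity claim, your overall strategy matches the paper's: characterize $B(E\cap p)$ as the minimal upward-closed set of surviving answers with conditional mass $\geq t$, then show the old belief region retains conditional mass $\geq t$ after discovering $p$. But your handling of the crucial step is more careful than the published proof. The paper asserts $Pr(B(E)\mid E\cap p)\geq t$ ``by \textsc{stability}'' in one line; as stated, \textsc{stability} has a \emph{prior}-probability antecedent, and the prior mass of $\bigcup\beta(E)$ can be as low as $t\cdot Pr(M_E)$, well below $t$ when $Pr(M_E)<1$ (e.g.\ two equiprobable answers, $E$ entailing one of them, $t=.9$), so the direct appeal does not apply. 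Your padding move --- applying \textsc{stability} to $\beta(E)\cup(Q\setminus Q_E)$, whose prior mass is at least $1-(1-t)Pr(M_E)\geq t$, noting that $B(E)\cap p\neq\emptyset$ supplies the compatibility antecedent, and that conditioning on $E\cap p\subseteq M_E$ annihilates the padding so $Pr_{E\cap p}(\bigcup X)=Pr_{E\cap p}(\bigcup\beta(E))$ --- is exactly the repair that licenses the paper's inference. So your write-up not only reaches the result but makes explicit a step the paper's proof leaves implicit.
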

    
\noindent This illustrates how a kind of qualitative stability of belief can be secured by a kind of probabilistic stability (given \textsc{orthogonality}), without entailing the full strength of AGM.\footnote{Leitgeb \cite[chapter~4]{Leit17} describes his theory as compatible with AGM (and thus with $\Box +$) since, upon getting new evidence, one may adopt a different, higher threshold than before. But doing so is in no way required by the demands of stability.
} 

We reject \textsc{stability} because we reject $\Diamond -$ (even in cases where \textsc{orthogonality} holds), and along with it the informal idea that rational belief should be stable in anything like the way that Leitgeb claims it should be. \textsc{stability} also places implausible constraints on what agents can believe at a given time. For example, \cite{KellLin21} show, in effect, that in \textbf{Flipping for Heads} \textsc{stability} entails that the only way to have any non-trivial beliefs about how many times the coin will be flipped is to believe that it will be flipped only once. (This argument depends only on the symmetries of the example, and doesn't depend on whether the coin is fair, biased towards heads, or biased towards tails.) See also \cite{Rott17} and \cite{DouvRott18}. 

\subsection{The Tracking Theory}\label{sec:LK}

Lin and Kelly \cite{LinKell12} defend a theory which (for reasons we can't explain here) they call the `tracking theory' of belief. 
This theory can be seen as an alternative way of defining belief in probability structures, with the parameter $t$ playing a rather different role. Put informally, a state $s$ is compatible with your LK-beliefs if there is no answer to $Q$ that is more than $\frac{1}{t}$ times more likely than $[s]_Q$. Formally:
\begin{definition} \label{def:LK}
$B_{LK}(E)=\{s\in E: (\forall q \in Q) (Pr_E([s]_Q) \geq t\times Pr_E(q))\}$
\end{definition}

\noindent In many cases -- such as \textbf{Flipping for Heads} -- the subject will have similar beliefs according to our theory and according to Lin and Kelly's (provided $t$ is chosen judiciously: low values of $t$ for Lin and Kelly correspond to high values of $t$ for us). However, there are important structural differences between the theories. In particular, LK-beliefs are sensitive only to local comparisons of probability between particular answers, while beliefs as we understand them depend also the probabilities of sets of answers. A consequence of this locality is that, as Lin and Kelly note, their theory validates a reasonably strong theory of belief revision (assuming \textsc{orthogonality}, which they essentially build in):

\begin{proposition} $\Box +$, $\Box -$, $\Box R$, $\Diamond R$,  and $\Pi -$ are all valid for LK-belief on the class of probability structures satisfying \textsc{orthogonality}. 
\end{proposition}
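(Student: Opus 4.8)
The plan is to first distill a purely combinatorial description of $B_{LK}$ under \textsc{orthogonality} and then check each principle against it; I will assume for cleanliness that $Pr$ gives every state positive probability (so that an answer meets $E$ iff it has positive posterior given $E$), the general case being a routine bookkeeping extension with ``meets $E$'' read as ``has positive posterior given $E$'' throughout. \textsc{orthogonality} says that conditioning on $E$ preserves the relative probabilities of any two answers compatible with $E$; since the posterior is a distribution supported on exactly those answers, this forces $Pr_E(q)=Pr(q)/Z_E$ for every $q$ with $q\cap E\neq\emptyset$, where $Z_E$ is a normalizing constant depending only on $E$. Substituting into Definition~\ref{def:LK} and cancelling $Z_E$, I obtain, writing $m(E)=\max\{Pr(q):q\in Q,\ q\cap E\neq\emptyset\}$ for the largest prior weight of an answer compatible with $E$,
\[
B_{LK}(E)=\{s\in E:\ Pr([s]_Q)\geq t\cdot m(E)\}.
\]
Two consequences of this formula do all the work. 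First, since $q\cap(E\cap p)\subseteq q\cap E$, every answer compatible with $E\cap p$ is compatible with $E$, so $m(E\cap p)\leq m(E)$. Second, $B_{LK}(E)$ is always non-empty: an answer $q^\ast$ realizing $m(E)$ satisfies $q^\ast\cap E\neq\emptyset$, and each such state has $Pr([s]_Q)=m(E)\geq t\cdot m(E)$ because $t\leq 1$, so $q^\ast\cap E\subseteq B_{LK}(E)$.

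The $\Box$ principles follow quickly. For $\Box+$, if $s\in B_{LK}(E)$ and $B_{LK}(E)\subseteq p$, then $s\in E\cap p$ and $Pr([s]_Q)\geq t\cdot m(E)\geq t\cdot m(E\cap p)$, so $s\in B_{LK}(E\cap p)$. For $\Box-$ the inequality $m(E\cap p)\leq m(E)$ runs the wrong way, so here the hypothesis $B_{LK}(E)\subseteq p$ must be used to upgrade it to an equality: taking $q^\ast$ with $Pr(q^\ast)=m(E)$, all of $q^\ast\cap E$ lies in $B_{LK}(E)$ (as above) and hence in $p$, so $q^\ast$ remains compatible with $E\cap p$, giving $m(E\cap p)\geq m(E)$ and thus $m(E\cap p)=m(E)$; then any $s\in B_{LK}(E\cap p)$ has $Pr([s]_Q)\geq t\cdot m(E\cap p)=t\cdot m(E)$ and $s\in E$, so $s\in B_{LK}(E)$. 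Finally $\Box R$ is immediate from either inclusion together with the non-emptiness of $B_{LK}(E\cap p)$.

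For $\Diamond R$, take any $s_0\in B_{LK}(E)\cap p$. Then $s_0\in E\cap p$ and $Pr([s_0]_Q)\geq t\cdot m(E)\geq t\cdot m(E\cap p)$, so $s_0\in B_{LK}(E\cap p)$ as well; hence $s_0\in B_{LK}(E)\cap B_{LK}(E\cap p)$, which is therefore non-empty. For $\Pi-$, let $\Pi\subseteq\mathcal{E}$ partition $E$ and pick an answer $q^\ast$ with $Pr(q^\ast)=m(E)$; since $q^\ast\cap E\neq\emptyset$, some block $p^\ast\in\Pi$ meets $q^\ast$, so $q^\ast$ is compatible with $p^\ast$ and $m(p^\ast)\geq Pr(q^\ast)=m(E)$, forcing $m(p^\ast)=m(E)$. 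As $p^\ast\subseteq E$ we have $E\cap p^\ast=p^\ast$, and any $s\in B_{LK}(p^\ast)$ satisfies $Pr([s]_Q)\geq t\cdot m(p^\ast)=t\cdot m(E)$ with $s\in E$, whence $s\in B_{LK}(E)$ and $B_{LK}(E\cap p^\ast)\subseteq B_{LK}(E)$.

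The one genuinely non-routine idea, which I expect to be the crux, is the observation that under the relevant hypothesis a maximal-weight answer \emph{survives} the discovery, so that the quantity $m(\cdot)$ governing the belief set is preserved rather than merely non-increasing; this is exactly what rescues $\Box-$ and $\Pi-$, where the default monotonicity of $m$ points against the desired inclusion. Everything else reduces to the renormalization identity $Pr_E(q)=Pr(q)/Z_E$, which is the sole content of \textsc{orthogonality} and the only place that assumption enters (locality of LK-belief means no global comparison with $t$ is ever needed). The remaining care is purely bookkeeping around degenerate cases ($Pr(E)=0$, answers or states of probability zero, and $t\in\{0,1\}$), all of which the displayed characterization of $B_{LK}$ handles uniformly.
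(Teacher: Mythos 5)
Your proof is correct, but it takes a genuinely different route from the paper's. The paper's own ``proof'' of this proposition is a patchwork of pointers: $\Diamond R$ is obtained by adapting the argument for Proposition 3 (the $\Diamond R$ result for the authors' own belief operator $B$), $\Box+$ and $\Box-$ are inherited from Proposition 7 --- which proves them for $B_{LK}$ on \emph{all} probability structures, with no \textsc{orthogonality} assumption, via ratio-monotonicity inequalities of the form $\frac{Pr([s]_Q|E\cap p)}{Pr(q|E\cap p)}\geq \frac{Pr([s]_Q|E)}{Pr(q|E)}$ --- and $\Pi-$ is outsourced entirely to an external reference. You instead prove everything in a unified, self-contained way from a single structural observation: under \textsc{orthogonality} the posterior is just the renormalized prior on answers compatible with $E$, so $B_{LK}(E)=\{s\in E:\ Pr([s]_Q)\geq t\cdot m(E)\}$ where $m(E)$ is the largest prior weight of an answer compatible with $E$, and all five principles reduce to the monotonicity $m(E\cap p)\leq m(E)$ together with your key ``survival'' lemma that under the relevant hypothesis a maximal-weight answer remains compatible after the discovery, forcing $m(E\cap p)=m(E)$. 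What each approach buys: the paper's route for $\Box+$/$\Box-$ is strictly more general (validity without \textsc{orthogonality}, which it needs anyway for Proposition 7 and which your $m$-based argument cannot deliver, since the renormalization identity is exactly what \textsc{orthogonality} provides); your route, by contrast, actually supplies a proof of $\Pi-$ for LK-belief, which the paper does not, and it makes transparent the role of the locality of $B_{LK}$ (no global comparison with $t$, in contrast to the law-of-total-probability argument the paper needs for $\Pi-$ for $B$ in Proposition 4). Two small points worth tightening: you should note that the maximum defining $m(E)$ is attained even when $Q$ is infinite (only finitely many answers can have prior weight above any positive bound, since these weights sum to at most $1$), and your deferred ``bookkeeping'' for zero-probability states should say explicitly that ``compatible with $E$'' becomes ``positive posterior given $E$'' in the survival step for $\Box-$, where $Pr(q^\ast\cap E\cap p)=Pr(q^\ast\cap E)>0$ follows from $q^\ast\cap E\subseteq B_{LK}(E)\subseteq p$ --- both are routine, as you predicted.
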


The major shortcoming of the tracking theory, in our view, is that it fails to entail \textsc{threshold}. Consider a case like \textbf{Drawing a Card}, in which one state initially has very low probability ($.1$) but every other state has even lower probability ($.017$). Then relative to a fine-grained question such as \textit{is the deck fair and which card will you draw}, you will LK-believe that the deck is a trick deck even for reasonably low values of $t$ (such as $.2$). But this belief is only $.1$ likely on your evidence! And we can, of course, make the case more extreme by increasing the number of distinct cards in the fair deck; so the believed proposition can be arbitrarily improbable for any fixed value of $t$.


One might defend the tracking theory against such cases by insisting that we choose a more coarse-grained question; while the theory still fails to entail \textsc{threshold}, this response at least prevents it from recommending the extreme violations just discussed. However, moving to coarser-grained questions is often in conflict with \textsc{orthogonality}. Moreover, the reasons we gave previously for rejecting \textsc{orthogonality} as a general constraint applies to the tracking theory as well: just like our theory, the tracking theory will make implausibly skeptical predictions in \textbf{One Hundred Flips} unless combined with an \textsc{orthogonality}-violating question such as \textit{how many heads will there be}.

Without \textsc{orthogonality}, the dynamics of LK-belief are substantially less constrained:
\begin{proposition} $\Box +$ and $\Box -$ are valid for LK-belief on the class probability structures.  $\Diamond R$ and $\Pi -$ can both fail in such structures.
\end{proposition}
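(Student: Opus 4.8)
The plan is to derive both validity claims from a single structural fact and then refute $\Diamond R$ and $\Pi-$ by explicit models. Throughout I work with the unnormalized weights $w_E(q):=Pr(E\cap q)$, since the normalizing factor $Pr(E)$ cancels in every comparison defining $B_{LK}$; thus $s\in B_{LK}(E)$ iff $s\in E$ and $w_E([s]_Q)\geq t\cdot M_E$, where $M_E:=\max_{q\in Q}w_E(q)$. (I assume $Pr(E)>0$ so that $B_{LK}(E)$ is defined, which forces $M_E>0$.) The key lemma is: \emph{if $B_{LK}(E)\subseteq p$ and $E':=E\cap p$, then $M_{E'}=M_E$}. To see this, let $q^{\ast}$ attain $M_E$. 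Since $t\leq 1$ we have $w_E(q^{\ast})=M_E\geq t M_E$, so every state of $E\cap q^{\ast}$ lies in $B_{LK}(E)\subseteq p$; hence $E\cap q^{\ast}=E'\cap q^{\ast}$ and $w_{E'}(q^{\ast})=w_E(q^{\ast})=M_E$. As $w_{E'}(q)\leq w_E(q)\leq M_E$ for every $q$ (because $E'\subseteq E$), this gives $M_{E'}=M_E$.

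Then $\Box-$ and $\Box+$ both drop out. For $\Box-$: if $s\in B_{LK}(E')$ then $s\in E'\subseteq E$ and $w_E([s]_Q)\geq w_{E'}([s]_Q)\geq tM_{E'}=tM_E$, so $s\in B_{LK}(E)$. For $\Box+$: if $s\in B_{LK}(E)$ then $w_E([s]_Q)\geq tM_E$, so $E\cap[s]_Q\subseteq B_{LK}(E)\subseteq p$; hence $s\in p$ (so $s\in E'$) and $w_{E'}([s]_Q)=w_E([s]_Q)\geq tM_E=tM_{E'}$, giving $s\in B_{LK}(E')$.

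For $\Diamond R$ I would reuse \textbf{Drawing a Card}, now computing $B_{LK}$ with $t=.2$. One checks $B_{LK}(S)=\{F_1,\dots,F_{52}\}$, since $.1<.2\times.9$ leaves $T$ unbelieved; while, writing $E'=\{F_{52},T\}\in\mathcal E$, one has $B_{LK}(E')=\{T\}$, since $Pr(F_{52})\approx.017<.2\times.1$. Thus $B_{LK}(S)\cap\{F_{52},T\}=\{F_{52}\}\neq\emptyset$ but $B_{LK}(S)\cap B_{LK}(E')=\emptyset$, violating $\Diamond R$.

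The main obstacle is $\Pi-$, because this same model does \emph{not} refute it: the cell $\{F_1\}$ already satisfies $B_{LK}(\{F_1\})\subseteq B_{LK}(S)$. The refutation instead requires a configuration in which $B_{LK}(E)$ is split across the partition so that discovering \emph{any} cell promotes a fresh answer to belief. I would take $S=\{a_1,a_2,b_1,c_1\}$ with uniform $Pr$, question $Q=\{\{a_1,a_2\},\{b_1\},\{c_1\}\}$, evidence family $\mathcal E=\{S,p_1,p_2\}$ with $p_1=\{a_1,b_1\}$ and $p_2=\{a_2,c_1\}$, and $t=.6$. Then $B_{LK}(S)=\{a_1,a_2\}$, since the answer $\{a_1,a_2\}$ has weight $\tfrac12$ and dominates while $\tfrac14<.6\times\tfrac12$; but $B_{LK}(p_1)=\{a_1,b_1\}$ and $B_{LK}(p_2)=\{a_2,c_1\}$, because in each cell the two surviving answers have equal weight $\tfrac14\geq.6\times\tfrac14$. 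Since $\Pi=\{p_1,p_2\}$ partitions $S$ and neither $B_{LK}(p_1)$ nor $B_{LK}(p_2)$ is contained in $B_{LK}(S)$, $\Pi-$ fails. The delicate point is exactly locating this model: the globally dominant answer must be coarse enough that restricting to each cell strips away enough of its weight for a different minority answer to overtake it, which is precisely the behavior that \textsc{orthogonality} (and hence the preceding positive result) forbids.
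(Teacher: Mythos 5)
Your proof is correct, but it differs from the paper's on both halves, in instructive ways. For the validity claims, the paper works directly with conditional probabilities, proving $B_{LK}(E\cap p)=B_{LK}(E)$ by comparing the ratios $Pr([s]_Q\mid E\cap p)/Pr(q\mid E\cap p)$ and $Pr([s]_Q\mid E)/Pr(q\mid E)$ and splitting into cases according to whether $Pr(q\mid E\cap p)$ or $Pr([s]_Q\mid E\cap p)$ vanishes. The underlying mechanism is the same as yours --- any maximal-probability cell, intersected with $E$, lies inside $B_{LK}(E)\subseteq p$, so conditioning on $p$ preserves its mass while every other cell can only lose mass --- but your packaging of this as the single unnormalized-weight lemma $M_{E\cap p}=M_E$ is tidier: it eliminates the zero-denominator case distinctions (where the paper's write-up in fact contains a garbled clause) and delivers both inclusions in two lines each, with the $Pr(E\cap p)>0$ worry discharged automatically since $Pr(E\cap p)\geq M_E>0$. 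For the failure claims, the paper constructs no separate countermodels: it derives both failures from the single failure of $\Pi R$ in \textbf{Drawing a Card v.2} (fair deck at probability $\frac{1}{5}$, the remaining $\frac{4}{5}$ spread over $52$ trick decks, $Q$ the $53$-cell deck question, $t>.25$), observing in effect that if $B(E)\cap B(E\cap p)=\emptyset$ for every cell $p$ of the partition, then $\Pi-$ fails because each $B(E\cap p)$ is nonempty, and $\Diamond R$ fails because $B(E)$, a nonempty subset of $E$, must meet some cell. Your route --- reusing the original \textbf{Drawing a Card} with $t=.2$ for $\Diamond R$, plus a bespoke four-state model for $\Pi-$ --- is equally valid; I checked the arithmetic in both models, and your preliminary observation that the card model cannot refute $\Pi-$ (since $B_{LK}(\{F_1\})\subseteq B_{LK}(S)$) is accurate and is exactly why a second model is needed on your approach. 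What each buys: the paper's derivation is more economical and ties the failures to the philosophically salient principle $\Pi R$, whereas yours is self-contained, exhibits the two failures independently, and your closing diagnosis --- that a $\Pi-$ countermodel must strip enough weight from a coarse dominant cell for a minority answer to overtake it, precisely what \textsc{orthogonality} forbids --- correctly explains why your models are consistent with the \textsc{orthogonality}-restricted validity result for LK-belief.
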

\noindent $\Box +$ is then the only principle valid for LK-beliefs but not for beliefs as we understand them.

Moreover, without \textsc{orthogonality}, the tracking theory invalidates a new principle that holds for belief as we understand it (assuming we restrict to probability structures with $t>.5$). Consider the following variant of \textbf{Drawing a Card} (taken from \cite{Hack67}, who also makes parallel observations as an objection to Levi's \cite{Levi67} account of belief):
\begin{quote}
    \textbf{Drawing a Card v.2}\\
    You are holding a deck which could be either a `fair' deck of 52 different cards, or one of 52 different `trick' decks that just contain the same card 52 times. Given your background evidence, the probability that you are holding the fair deck is $\frac{1}{5}$, with the remaining $\frac{4}{5}$ distributed evenly across the 52 trick decks. You are about to draw and turn over one card from your deck. 
\end{quote}
Let us assume that $Q$ is \emph{which of the 53 possible decks am I holding} and $t>.25$. According to the tracking theory, you initially believe that you hold the fair deck, but after drawing a card you believe that you are holding the relevant trick deck. So we have a failure of the following principle, which says (roughly) that if you're sure that, whatever you're about to discover, you'll believe that a given proposition is false, then don't currently believe that the proposition is true:
\begin{itemize}
    \item[$\Pi R$] If $\Pi$ is a partition any member of which you could discover, there is a $p\in \Pi$ such that you shouldn't reverse any of your beliefs upon discovering $p$.
   \begin{itemize}
       \item[] If $\Pi\subseteq \mathcal{E}$ is a partition of $E$, then $B(E) \cap  B(E\cap p)\neq \emptyset$ for some $p\in \Pi$.
   \end{itemize} 
\end{itemize}
By contrast, if belief requires probability over a threshold greater than $.5$ (as it does on our account), this principle cannot fail.\footnote{\label{fn:weakbelief}Failures of $\Pi R$ are to be expected for certain notions of belief that are weaker than the one we are operating with here. For example, your `best guess' about what deck you are holding plausibly does change no matter what card you draw; and arguably what we `believe' (in ordinary English) often aligns with our best guesses. See \cite{Holg22} and \cite{DorsMandfc} for discussion.}

Overall, then, we see few advantages for the tracking theory over our own. Given \textsc{orthogonality}, which Lin and Kelly essentially build into their formalism, the tracking theory offers a stronger theory of belief revision. However, the theory violates \textsc{threshold}, often in dramatic ways. Moreover, to make reasonable predictions in cases like \textbf{One Hundred Flips}, both theories need to appeal to coarse-grained questions that conflict with \textsc{orthogonality}. Having done so, both theories invalidate many principles of belief revision, although the details differ slightly (with our theory invalidating $\Box +$ and Lin and Kelly's invalidating $\Pi R$).

\section{Further work}

We conclude with three directions for further work. One concerns nonmonotonic consequence, where $p\nc q$ is interpreted as $B(p)\subseteq q$. We think that distinguishing one's evidence from one's beliefs that go beyond one's evidence offers a productive way of thinking about nonmonotonic consequence, and that the logic resulting from our framework contrasts in interesting ways with the one resulting from Lockean theories of belief (explored in \cite{Hawt96}).  

The second direction concerns constraints on $\mathcal{E}$. Consider, for example, the Monty Hall problem, in which it is crucial that when one gets new evidence about one's environment, one also gets evidence that one has gotten such evidence. We argue in \cite{GoodSaloRev} that such cases motivate a \emph{nestedness} requirement on $\mathcal{E}$: if two possible bodies of evidence are mutually consistent, then one entails the other. This requirement induce new subtleties in the resulting nonmonotonic logic. 

A third question for future work concerns what happens when probability structures are generalized by making the relevant question a function of one's evidence.  \cite[Appendix~C]{GoodSaloTARK} motivate this generalization, in order to vindicate certain judgments about a family of examples discussed in \cite{GoodSaloEpis}. We hope to explore these models in future work; one notable feature is that they invalidate $\Box -$ but still validate $\Pi R$. 

\section*{Acknowledgements}

We thank Kevin Dorst, Josh Pearson, and three anonymous referees for TARK for very helpful comments on earlier versions of this material.

\appendix

\section{Proofs}

\setcounter{proposition}{0}

\begin{proposition}
$\Box -$ and $\Box R$ are valid on the class of probability structures. 
\end{proposition}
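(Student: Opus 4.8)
The plan is to prove $\Box-$ and then read off $\Box R$ as a corollary. The first observation is that whether $s\in B(F)$ depends on $s$ only through its answer $[s]_Q$ (together with $s\in F$), so if I write $\mathcal{B}(F)\subseteq Q$ for the set of \emph{believed answers} — those $q$ with $Pr_F(\{s':Pr_F([s']_Q)>Pr_F(q)\})<t$ — then $B(F)=F\cap\bigcup\mathcal{B}(F)$. Moreover $B(F)\neq\emptyset$: a most probable answer has no strictly more probable answer above it, so it is believed (this uses that a maximal-probability answer exists, as in the discrete setting at issue). Hence, once $\Box-$ is in hand, writing $E'=E\cap p$ gives $B(E')\subseteq B(E)$ with $B(E')\neq\emptyset$, so $B(E)\cap B(E')=B(E')\neq\emptyset$, which is $\Box R$. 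Everything thus reduces to showing that every answer believed given $E'$ is believed given $E$.

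Two structural facts about $\mathcal{B}$, both read off the definition by picturing belief as a threshold-crossing down the list of answers ordered by decreasing $Pr_F$-probability, drive the argument. First, \emph{monotonicity}: if $q\in\mathcal{B}(F)$ and $Pr_F(q'')\geq Pr_F(q)$ with $q''\cap F\neq\emptyset$, then $q''\in\mathcal{B}(F)$, since the answers strictly more probable than $q''$ form a subset of those strictly more probable than $q$; thus $\mathcal{B}(F)$ is an up-set in the probability order. Second, a \emph{mass lemma}: $\sum_{q\in\mathcal{B}(F)}Pr_F(q)\geq t$. Indeed, the believed answers are exactly those at or above the lowest believed probability level; the mass strictly above the next level down equals this total and is $\geq t$ precisely because answers at that lower level are not believed (and the total is $1\geq t$ in the degenerate case where all answers are believed).

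The remaining work is to track conditioning on $p$. Put $c=Pr(E)/Pr(E')\geq 1$. For every answer, $Pr_{E'}(q')=Pr(q'\cap E\cap p)/Pr(E')\leq Pr(q'\cap E)/Pr(E')=c\,Pr_E(q')$, with equality whenever $q'\cap E\subseteq p$. The hypothesis $B(E)\subseteq p$ says exactly that $q\cap E\subseteq p$ for every $q\in\mathcal{B}(E)$, so each $E$-believed answer is rescaled uniformly: $Pr_{E'}(q)=c\,Pr_E(q)$. With the mass lemma this yields $\sum_{q\in\mathcal{B}(E)}Pr_{E'}(q)=c\sum_{q\in\mathcal{B}(E)}Pr_E(q)\geq t$. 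Now take any $q^*$ believed given $E'$. By definition the answers with $Pr_{E'}>Pr_{E'}(q^*)$ carry total $E'$-mass below $t$, so the $E'$-mass-$\geq t$ set $\mathcal{B}(E)$ cannot lie entirely above $q^*$; pick $q\in\mathcal{B}(E)$ with $Pr_{E'}(q)\leq Pr_{E'}(q^*)$. For this $q$ we get $c\,Pr_E(q)=Pr_{E'}(q)\leq Pr_{E'}(q^*)\leq c\,Pr_E(q^*)$, hence $Pr_E(q^*)\geq Pr_E(q)$, and monotonicity puts $q^*\in\mathcal{B}(E)$, as required.

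I expect the main obstacle to be that conditioning on $p$ can reorder the answers, so one cannot simply follow individual answers through the update. The resolution is to exploit the hypothesis: it forces the $E$-believed answers to be exactly those that conditioning rescales by the common factor $c$, which both keeps them an up-set and, via the mass lemma, guarantees they retain total $E'$-probability at least $t$; a mass comparison then produces a believed anchor answer sitting no higher than any $E'$-believed $q^*$, and monotonicity closes the argument. The only points needing minor care are ensuring the bodies of evidence have positive prior probability (so $Pr_E,Pr_{E'}$ are defined) and checking the threshold-crossing description of $\mathcal{B}$ when $Q$ is infinite, since there the believed set is finite whenever its cutoff level is positive and is all of $Q$ otherwise; neither affects the structure of the argument.
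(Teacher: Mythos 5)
Your proof is correct and takes essentially the same route as the paper's: both exploit the fact that $B(E)\subseteq p$ makes conditioning on $p$ rescale each believed answer by the common factor $c=Pr(E)/Pr(E\cap p)\geq 1$, so that $B(E)$ remains, relative to $E\cap p$, an up-set of answers with posterior mass at least $t$, which forces $B(E\cap p)\subseteq B(E)$. The only difference is packaging: the paper simply asserts that $B(E\cap p)$ is the minimal up-closed set of posterior mass $\geq t$, whereas you prove that containment explicitly (via your mass lemma and the anchor answer $q\in\mathcal{B}(E)$ with $Pr_{E\cap p}(q)\leq Pr_{E\cap p}(q^*)$), and you also make explicit the nonemptiness of $B(E\cap p)$ that the paper's inference from $\Box -$ to $\Box R$ tacitly uses.
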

\begin{proof}
Since $\Box -$ entails $\Box R$, it's sufficient to prove the former. We suppose that $B(E) \subseteq p$, and show that $B(E\cap p)\subseteq B(E)$.

Note that  if $s\in B(E)$, $[s]_Q\subseteq B(E)\subseteq p$. So for any $q\in Q$, if $Pr([s]_Q|E)\geq Pr(q|E)$, then also $Pr([s]_Q|E\cap p)\geq Pr(q|E\cap p)$. Contraposing, this means that if $Pr(q|E\cap p)\geq Pr([s]_Q|E\cap p)$ and $s\in B(E)$, then $Pr(q|E)\geq Pr([s]_Q|E)$, and so $q \subseteq B(E)$.

Moreover, since $B(E) \subseteq p$, $Pr(B(E)|E\cap p)\geq Pr(B(E)|E)\geq t$.

Now, note that $B(E\cap p)$ is the minimal $X\subseteq E\cap p$ such that (i) if $s \in X$ and $Pr(q|E\cap p)\geq Pr([s]_Q|E\cap p)$ for $q\in Q$, then $q \subseteq X$, and (ii) $Pr(X|E\cap p)\geq t$. By the above, $B(E)$ satisfies both (i) and (ii); so it contains the minimal such $X$ as a subset. So $B(E\cap p)\subseteq B(E)$, as required.

\end{proof}

\begin{proposition}
$\Diamond -$, $\Diamond R$, and $\Box +$ can all fail in  probability structures.
\end{proposition}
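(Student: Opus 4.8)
The plan is to prove this proposition by exhibiting explicit probability structures in which each principle fails, rather than arguing abstractly. Since the proposition is an existence claim ("can all fail"), producing concrete counterexamples is both the most natural and most convincing route. I would organize the proof around the two thought experiments already introduced in the body of the paper, which do most of the work.

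First I would handle $\Diamond -$ and $\Box +$ together using the \textbf{Flipping for Heads} structure. The key preliminary computation is to verify that, for the evidence $E = \{s_i, s_{i+1}, \dots\}$, we have $B(E) = \{s_i, \dots, s_{i+6}\}$. This follows from Definition~1.3: conditional on $E$, the answer $\{s_j\}$ has probability $2^{-(j-i+1)}$, the answers strictly more probable than $\{s_j\}$ are exactly $\{s_i\}, \dots, \{s_{j-1}\}$, and their total conditional probability is $1 - 2^{-(j-i)}$; this is below $t = .99$ precisely when $2^{-(j-i)} > .01$, i.e. when $j - i \leq 6$. For $\Diamond -$, I would take $p = \{s_2, s_3, \dots\}$ and $E = S$, observe that $B(S) \cap p = \{s_2, \dots, s_7\} \neq \emptyset$, and compute $B(S \cap p) = B(\{s_2,s_3,\dots\}) = \{s_2, \dots, s_8\} \not\subseteq \{s_1, \dots, s_7\} = B(S)$, contradicting $\Diamond -$. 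Since $\Diamond -$ entails $\Diamond R$ is the wrong direction, I should note the body text's framing: I actually need a separate argument for $\Diamond R$. For $\Box +$, I would augment $\mathcal{E}$ with $E' = \{s_1, \dots, s_7\}$ as described, verify $B(S) = \{s_1, \dots, s_7\} \subseteq E'$ (so the antecedent holds), and compute $B(E')$ directly: conditional on $E'$, the state $\{s_j\}$ for $j \leq 7$ has probability $2^{-j}/(1 - 2^{-7})$, and the highest-posterior-density region capturing mass $\geq .99$ turns out to be $\{s_1, \dots, s_6\}$, so $B(S) \not\subseteq B(E')$.

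For $\Diamond R$ I would use the \textbf{Drawing a Card} structure, since (as the paper stresses) counterexamples to $\Diamond R$ are subtler and require a discovery that cross-cuts the question. I would take $E = S$ and $p = \{F_{52}, T\}$ (the proposition that an Ace of Spades is drawn). The two steps are: (i) show $B(S) = \{F_1, \dots, F_{52}\}$, i.e. you initially believe the deck is fair, which holds because $Pr(\{F_1,\dots,F_{52}\}) = .9 \geq t = .85$ while the only strictly-more-probable-answer set is empty; hence $B(S) \cap p = \{F_{52}\} \neq \emptyset$, so the antecedent of $\Diamond R$ is satisfied; and (ii) show $B(S \cap p) = B(\{F_{52}, T\}) = \{T\}$, i.e. after discovering the Ace of Spades you believe the trick deck, because $Pr_E(\{T\}) \approx .855 > Pr_E(\{F_{52}\})$ and $.855 \geq t$. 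Then $B(S) \cap B(S \cap p) = \{F_1,\dots,F_{52}\} \cap \{T\} = \emptyset$, violating $\Diamond R$.

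The main obstacle is purely computational bookkeeping rather than conceptual: correctly evaluating $B(\cdot)$ from Definition~1.3 in each structure, since the definition involves a nested condition (the total conditional probability of all \emph{strictly more probable} answers falling below $t$) that is easy to off-by-one. In particular I expect the delicate points to be the threshold arithmetic in \textbf{Flipping for Heads} (pinning down that the cutoff lands at exactly seven states for $E = S$ and $E \cap p$, but at six states for the renormalized $E'$) and confirming the conditional-probability inequality $Pr(\{T\} \mid \{F_{52},T\}) > t$ in \textbf{Drawing a Card}. Since all three counterexamples are already sketched in the main text, the proof amounts to carefully discharging these verifications.
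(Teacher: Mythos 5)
Your proposal is correct and matches the paper's approach exactly: the paper's proof of this proposition simply points to the counter-models in the main text, namely \textbf{Flipping for Heads} (with the added evidence $E'=\{s_1,\dots,s_7\}$) for $\Diamond -$ and $\Box +$, and \textbf{Drawing a Card} for $\Diamond R$, which are precisely the structures you use. Your detailed verifications of $B(E)$ in each structure (including the cutoff at seven states for the tail evidence versus six for the renormalized $E'$, and $Pr(\{T\}\mid\{F_{52},T\})\approx .85{-}.86 > t$) are accurate and simply make explicit what the paper leaves to the reader.
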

\begin{proof}Counter-models are given in the main text.
\end{proof}

\begin{proposition}
$\Diamond R$ is valid in  probability structures satisfying \textsc{orthogonality}. 
\end{proposition}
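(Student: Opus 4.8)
The plan is to exhibit a single answer to $Q$ whose cell survives in both $B(E)$ and $B(E\cap p)$. First I would dispose of the degenerate case: if $t=0$ then $B(E)=\emptyset$, so the antecedent $B(E)\cap p\neq\emptyset$ fails and there is nothing to prove; hence I may assume $t>0$, and I take $E\cap p\in\mathcal{E}$ since we are considering a genuine discovery. Next I would record two structural facts valid for any $E\in\mathcal{E}$. First, $B(E)$ is a union of answer-cells restricted to $E$: membership of $s$ in $B(E)$ depends only on $[s]_Q$, so it makes sense to call an answer $q$ a \emph{$B(E)$-answer}, meaning $q\cap E\subseteq B(E)$. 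Second, the collection of $B(E)$-answers is upward closed in $Pr_E$-probability: if $q$ is a $B(E)$-answer and $Pr_E(q')\geq Pr_E(q)$, then $q'$ is a $B(E)$-answer too, since the answers strictly more probable than $q'$ form a subset of those strictly more probable than $q$, hence a set of $Pr_E$-measure below $t$. In particular, any $Pr_E$-maximal answer is a $B(E)$-answer (this is where $t>0$ is used).

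The engine of the argument is \textsc{orthogonality}. Applying it to both $E$ and $E\cap p$ and equating the two prior-ratio expressions, I would deduce that on the answers compatible with $E\cap p$ the orderings induced by $Pr_E$ and by $Pr_{E\cap p}$ coincide, while every answer disjoint from $p$ has $Pr_{E\cap p}$-probability $0$. Now let $q^*$ be an answer maximizing $Pr_{E\cap p}$ among the answers compatible with $E\cap p$; such a maximizer exists because answer-probabilities are masses of a probability distribution, so only finitely many exceed any positive value. Being $Pr_{E\cap p}$-maximal, $q^*$ is a $B(E\cap p)$-answer by the second structural fact applied to $E\cap p$. By the order-agreement just noted, $q^*$ also maximizes $Pr_E$ among the answers compatible with $E\cap p$.

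It remains to show $q^*$ is a $B(E)$-answer, and this is the step into which the hypothesis $B(E)\cap p\neq\emptyset$ feeds. That hypothesis supplies a state $s_0\in B(E)\cap p$, so $[s_0]_Q$ is a $B(E)$-answer compatible with $E\cap p$; by the maximality of $q^*$ we get $Pr_E(q^*)\geq Pr_E([s_0]_Q)$, and the upward-closure fact upgrades $q^*$ to a $B(E)$-answer. Finally I would pick any $s^*\in q^*\cap E\cap p$ (nonempty, since $q^*$ is compatible with $E\cap p$): it lies in $B(E\cap p)$ because $q^*$ is a $B(E\cap p)$-answer, and in $B(E)$ because $q^*$ is a $B(E)$-answer, so $s^*\in B(E)\cap B(E\cap p)$, as required.

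The hard part is conceptual rather than computational: one must notice that the correct witness is the \emph{most probable} $p$-compatible answer, not an arbitrary answer of $B(E)$. The antecedent only guarantees that \emph{some} $p$-compatible answer lies in $B(E)$, but upward closedness then forces the most probable such answer into $B(E)$, and \textsc{orthogonality} is exactly what ensures that this same answer stays most probable after conditioning on the discovery, placing it in $B(E\cap p)$ as well. The only genuinely technical points are the existence of the maximizer (via summability of the answer-masses) and the bookkeeping that converts \textsc{orthogonality} into the agreement of the $Pr_E$- and $Pr_{E\cap p}$-orderings on answers compatible with $E\cap p$.
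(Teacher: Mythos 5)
Your proof is correct and is in essence the same argument as the paper's: both exhibit as witness a state in a most-probable answer compatible with the discovery, using \textsc{orthogonality} to show that the $Pr_E$- and $Pr_{E\cap p}$-orderings of answers agree (up to answers of probability zero), together with the fact that believed answers are upward closed in probability. The only difference is organizational -- the paper picks $s\in B(E)\cap p$ whose cell is $Pr_E$-maximal within $B(E)\cap p$ and derives a contradiction from any $Pr_{E\cap p}$-more-probable answer, whereas you take the globally $Pr_{E\cap p}$-maximal answer $q^*$ and argue it back into $B(E)$ via upward closure from $[s_0]_Q$; your explicit handling of $t=0$ and of the existence of the maximizer fills in details the paper leaves tacit.
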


\begin{proof}
Suppose that $B(E)\cap p \neq \emptyset$. Let $s\in B(E)\cap p$ be such that $Pr_E([s]_Q)\geq Pr_E([s']_Q)$ for any $s' \in B(E)\cap p$. We will show that, given \textsc{orthogonality}, there can be no $q\in Q$ such that $Pr_{E\cap p}(q)> Pr_{E\cap p}([s]_Q)$. It follows that $s \in B(E \cap p)$, thus establishing $B(E)\cap B(E\cap p)\neq \emptyset$.

By \textsc{orthogonality}, if $q\in Q$ and $Pr_{E\cap p}(q)> Pr_{E\cap p}([s]_Q)$, then either $Pr_E(q)> Pr_E([s]_Q)$ or else $Pr_{E\cap p}([s]_Q)=0$. But $s\in E\cap p$, so $Pr_{E\cap p}([s]_Q)\neq 0$. So suppose $Pr_E(q)> Pr_E([s]_Q)$. By the way $s$ was chosen, it follows that $q\cap (B(E)\cap p) = \emptyset$. But $q\cap p \neq \emptyset$, since $Pr_{E\cap p}(q)>0$. So $q\cap B(E) = \emptyset$. But since $s \in B(E)$, this contradicts the assumption that $Pr_E(q)> Pr_E([s]_Q)$.

\end{proof}

\begin{proposition}
$\Pi -$ can fail in probability structures. But it is valid in  probability structures satisfying \textsc{orthogonality}.
\end{proposition}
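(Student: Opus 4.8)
The plan is to prove the two halves separately: exhibit a (necessarily non-orthogonal) counterexample for the first claim, and give a direct averaging argument for the second.

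For the failure, I would build a small structure in which every cell of some partition forces the agent to drop a belief. Take a question $Q=\{q_1,q_2,q_3\}$ with three answers, each split across two cells $p_1,p_2$ that partition $E=S$, and set $\mathcal{E}=\{S,p_1,p_2\}$. The idea is to choose the prior so that (i) pooled over $E$ the answer $q_1$ is the unique most probable answer and already carries probability $\geq t$, so that $B(E)=q_1$; but (ii) conditional on $p_1$ the answer $q_2$ becomes dominant and crosses the threshold, while conditional on $p_2$ the answer $q_3$ does. Then $B(p_1)$ consists of $q_2$-states and $B(p_2)$ of $q_3$-states, so neither is contained in $B(E)=q_1$, and $\Pi-$ fails. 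Such a choice is possible precisely because $q_1$'s overall dominance can be carried by a large mass in whichever cell it is not the local winner; a concrete witness is $t=0.4$ with weights chosen so that the $p_1$-conditionals $(Pr_{p_1}(q_1),Pr_{p_1}(q_2),Pr_{p_1}(q_3))=(0.4,0.5,0.1)$, the $p_2$-conditionals are $(0.4,0.1,0.5)$, and the pooled probabilities are $(0.4,0.3,0.3)$. One checks directly that $B(E)=q_1$, $B(p_1)=q_2\cap p_1$, $B(p_2)=q_3\cap p_2$. \textsc{orthogonality} must of course fail here, since the relative probability of $q_1$ to $q_2$ changes from $0.4/0.3$ at $E$ to $0.4/0.5$ at $p_1$.

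For the valid direction, write $G$ for the union of those answers meeting $B(E)$, so that $B(E)=E\cap G$. Two facts drive the argument. First, $G$ is upward closed in the $Pr_E$-ordering of answers (if an answer is at least as probable as one already believed, then the strictly-more-probable answers above it form a subset, so its exclusion-mass is still below $t$); and, by \textsc{threshold} applied with $p=B(E)$, we have $Pr(B(E)|E)=Pr(G|E)\geq t$. Second, since $\Pi$ partitions $E$ and $E\cap p=p$, countable additivity gives $Pr(G|E)=\sum_{p\in\Pi}Pr(p|E)\,Pr(G|p)$; as the (positive-probability) weights sum to $1$, some cell $p^{*}$ must satisfy $Pr(G|p^{*})\geq t$. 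I will then show $B(p^{*})\subseteq G$, which gives $B(p^{*})\subseteq G\cap p^{*}\subseteq G\cap E=B(E)$, as required.

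The crux, and the only place \textsc{orthogonality} enters, is this last inclusion $B(p^{*})\subseteq G$. Under \textsc{orthogonality} the ratios of answer-probabilities are preserved under conditioning, so the $Pr_{p^{*}}$-ordering of answers meeting $p^{*}$ agrees with their $Pr_E$-ordering; in particular every $G$-answer meeting $p^{*}$ is $Pr_{p^{*}}$-strictly-above every non-$G$ answer. Hence for any answer $q\notin G$ meeting $p^{*}$, the answers strictly more probable than $q$ within $p^{*}$ include all of $G\cap p^{*}$, whose $Pr_{p^{*}}$-mass is already $\geq t$; so $q$ contributes no states to $B(p^{*})$, and $B(p^{*})$ uses only $G$-answers. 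The main obstacle is exactly this step: without \textsc{orthogonality} a non-$G$ answer can be reshuffled to the top under $p^{*}$ and enter $B(p^{*})$ even when $Pr(G|p^{*})\geq t$, which is precisely what the counterexample above exploits. I would also flag the harmless book-keeping around ties (tied answers grouped together, with the upward-closure and prefix claims stated for these groups) and around null cells in an infinite partition.
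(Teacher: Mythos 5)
Your proposal is correct, but it takes a genuinely different route from the paper on both halves. For the failure of $\Pi -$, the paper's countermodel is a uniform six-state structure with $t=.65$ and $Q=\{\{s_1,s_2\},\{s_3,s_4\},\{s_5\},\{s_6\}\}$, where conditioning on either cell makes all surviving answers \emph{tie}, so each $B(p_i)$ blows up to the whole cell; your model instead works by making a different answer the strict local winner in each cell while $q_1$ wins pooled, and your numbers check out (with $w_1=w_2=\tfrac12$: $B(E)=q_1$ since $Pr_E(q_1)=0.4\geq t$ while $q_2,q_3$ have exclusion mass $0.4\not< t$; $B(p_1)=q_2\cap p_1$ and $B(p_2)=q_3\cap p_2$, neither contained in $q_1$). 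For validity, the paper argues by contradiction: assuming $B(E\cap p_i)\not\subseteq B(E)$ for every cell, it picks in each cell a $Pr_{E\cap p_i}$-maximal witness $s_i\in B(E\cap p_i)\setminus B(E)$, uses \textsc{orthogonality} to transfer the smallness of that witness's exclusion set, and then applies total probability to the overall maximum $s_k$ to conclude that the upward set of $[s_k]_Q$ has $Pr_E$-mass below $t$, contradicting $s_k\notin B(E)$. You run the dual, direct argument: average the mass of the believed region $G$ over the cells to locate $p^{*}$ with $Pr(G|p^{*})\geq t$, then use \textsc{orthogonality} (as preservation of the strict ordering of positive-probability answers) to get $B(p^{*})\subseteq G$, hence $B(p^{*})\subseteq G\cap E=B(E)$. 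The same two ingredients drive both proofs—total probability across $\Pi$ plus \textsc{orthogonality} as order-preservation—but yours constructively identifies the ``safe'' cell and isolates the single point where \textsc{orthogonality} is used, at the cost of needing the auxiliary facts that $G$ is upward closed and that $Pr(B(E)|E)\geq t$. That last fact is not entirely immediate from Definition 2 (it needs a one-line argument via the most probable excluded answer-group, since the believed prefix must have mass $\geq t$ whenever anything is excluded, and has mass $1$ otherwise), but the paper itself assumes it without comment in the proofs of Propositions 1 and 5, so your appeal to \textsc{threshold} with $p=B(E)$ matches the paper's own usage; the bookkeeping you flag (ties, null cells, zero-probability answers contributing no mass) is indeed all that remains.
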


\begin{proof}
To see that $\Pi -$ can fail, consider 
\begin{quote}
    \begin{tabular}{ll}
$S=\{s_1,s_2,s_3,s_4,s_5,s_6\}$ & $\mathcal{E}=\{S,\{s_1,s_3,s_5\},\{s_2,s_4,s_6\}\}$ \\ $Q=\{\{s_1,s_2\},\{s_3,s_4\},\{s_5\},\{s_6\}\}$  & $Pr$ is uniform \\
$t=.65$
\end{tabular}
\end{quote}
Let $p=\{s_1,s_3,s_5\}$ and $\Pi=\{p, S\setminus p\}$. Then $B(S\cap p)=\{s_1,s_3,s_5\}\not \subseteq \{s_1,s_2,s_3,s_4\} = B(S)$ and $B(S\cap S\setminus p)=\{s_2,s_4,s_6\}\not \subseteq B(S)$.

Now suppose $\langle S,\mathcal{E},Q,Pr,t\rangle$ satisfies \textsc{orthogonality}. To show that $\Pi -$ holds, we suppose that $B(E\cap p_i) \not\subseteq B(E)$ for each $p_i\in\Pi$, and deduce a contradiction.

For each $i$, let $s_i\in B(E\cap p_i)\setminus B(E)$ be such that $Pr_{E\cap p_i}([s_i]_Q)\geq Pr_{E\cap p_i}([s]_Q)$ for every $s \in  B(E\cap p_i)\setminus B(E)$. Since $s_i \in B(E\cap p_i)$, $Pr_{E\cap p_i}(\{s:Pr_{E\cap p_i}([s]_Q)\geq Pr_{E\cap p_i}([s_i]_Q)\} < t$. By \textsc{orthogonality}, $Pr_E([s]_Q)\geq Pr_E([s_i]_Q)$ entails that either $Pr_{E\cap p_i}([s]_Q)\geq Pr_{E\cap p_i}([s_i]_Q)$ or $Pr_{E\cap p_i}([s]_Q)=0$. So $Pr_{E\cap p_i}(\{s:Pr_E([s]_Q)\geq Pr_E([s_i]_Q)\})=Pr_{E\cap p_i}(\{s:Pr_{E\cap p_i}([s]_Q)\geq Pr_{E\cap p_i}([s_i]_Q)\})< t$.


Now let $k$ be such that, for every $i$, $Pr_E([s_k]_Q)\geq Pr_E([s_i]_Q)$. Then $\{s:Pr_E([s]_Q)\geq Pr_E([s_k]_Q)\} \subseteq \{s:Pr_E([s]_Q)\geq Pr_E([s_i]_Q)\}$, and so $Pr_{E\cap p_i}(\{s:Pr_E([s]_Q)\geq Pr_E([s_k]_Q)\})\leq Pr_{E\cap p_i}(\{s:Pr_E([s]_Q)\geq Pr_E([s_i]_Q)\})<t$ for every $i$. But then by the law of total probability, $Pr_{E}(\{s:Pr_E([s]_Q)\geq Pr_E([s_k]_Q)\})<t$, contradicting the assumption that $s_k \notin B(E)$.

\end{proof}

\begin{proposition}
$\Diamond -$ is valid in probability structures satisfying \textsc{stability} and \textsc{orthogonality}; but $\Box+$ can fail in such structures. Moreover, $\Diamond -$ can fail in probability structures satisfying \textsc{stability} in which \textsc{orthogonality} fails.
\end{proposition}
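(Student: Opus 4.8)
The plan is to handle the three claims separately: the validity of $\Diamond-$ under both constraints is the substantive part, while the two impossibility claims only require exhibiting one structure of each kind.

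For the validity of $\Diamond-$, suppose $B(E)\cap p\neq\emptyset$, write $E'=E\cap p$, and recall from the proof of Proposition~1 that $B(E')$ is the smallest $X\subseteq E'$ that is downward closed in the $Pr_{E'}$-ordering of answers (if $s\in X$ and $Pr_{E'}(q)\ge Pr_{E'}([s]_Q)$ then $q\cap E'\subseteq X$) and satisfies $Pr_{E'}(X)\ge t$. So it suffices to check that $B(E)\cap E'$ is itself such a set; then $B(E')\subseteq B(E)\cap E'\subseteq B(E)$, as required. First, since $B(E)\cap p\neq\emptyset$, the set $B(E)\cap E'$ is nonempty. The downward-closure condition is where \textsc{orthogonality} enters: for answers $q,q'$ that both meet $E'$, \textsc{orthogonality} forces $Pr_{E'}(q)/Pr_{E'}(q')=Pr_E(q)/Pr_E(q')$, so the two orderings agree on answers compatible with $E'$. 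Hence if $s\in B(E)\cap E'$ and $q$ meets $E'$ with $Pr_{E'}(q)\ge Pr_{E'}([s]_Q)$, then also $Pr_E(q)\ge Pr_E([s]_Q)$, and downward-closure of $B(E)$ under $Pr_E$ (immediate from the definition of $B$) gives $q\cap E\subseteq B(E)$, whence $q\cap E'\subseteq B(E)\cap E'$.

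The crux is the threshold condition $Pr_{E'}(B(E))\ge t$, and this is where \textsc{stability} is needed. The difficulty is that \textsc{stability} compares the prior $Pr(\bigcup X)$ with the posterior $Pr(\bigcup X\mid E')$, whereas what we have in hand is $Pr_E(B(E))\ge t$, a statement about the posterior given $E$; and $Pr(B(E))$ itself can be well below $t$, so \textsc{stability} does not apply to $B(E)$ directly. I would bridge this gap with a padding trick. Under \textsc{orthogonality} each posterior $Pr_E$ is obtained from the prior by deleting the answers incompatible with $E$ and renormalising, so $Pr_E(q)=Pr(q)/M_E$ for answers $q$ meeting $E$, where $M_E$ is the prior mass of all answers meeting $E$. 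Let $Y$ be the union of $B(E)$'s answers together with every answer disjoint from $E$. The padding answers are also disjoint from $E'$, so they change neither the $E'$-conditional of $Y$ nor the fact that $Y$ meets $E'$; but they raise the prior mass of $Y$ to at least $tM_E+(1-M_E)=1-M_E(1-t)\ge t$. Now \textsc{stability} applies to $Y$ and yields $Pr(Y\mid E')\ge t$, which is exactly $Pr_{E'}(B(E))\ge t$. This completes the threshold condition and hence the validity of $\Diamond-$.

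For the two impossibility claims I would exhibit explicit finite models. To see that $\Box+$ can fail under both constraints, take three singleton answers of prior probability $.5,.3,.2$ with $t=.6$ and $\mathcal{E}=\{S,\{s_1,s_2\}\}$; this satisfies \textsc{orthogonality} (every body is a union of answers) and satisfies \textsc{stability} (the only prior-heavy unions are $\{s_1,s_2\}$ and $S$, and conditioning on $\{s_1,s_2\}$ keeps both above $.6$). Here $B(S)=\{s_1,s_2\}\subseteq p=\{s_1,s_2\}$ but $B(S\cap p)=\{s_1\}$, because ruling out $s_3$ raises $s_1$'s probability to $.625\ge t$, so $s_1$ alone now reaches the threshold and $s_2$ is dropped. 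To see that $\Diamond-$ can fail once \textsc{orthogonality} is dropped, I would use a \textsc{stability}-respecting analogue of \textbf{Drawing a Card}: three answers of prior probability $.5,.3,.2$ with $t=.7$, and a cross-cutting $p$ chosen so that $Pr_p$ assigns $.5,.2,.3$ to the three answers (realisable by splitting each answer across $p$, which is exactly what makes \textsc{orthogonality} fail, since $p$ changes the ratio of the first and third answers without ruling either out). Then $B(S)$ is the union of the first two answers, while $B(S\cap p)$ contains the third: only the first answer lies strictly above it in $Pr_p$, with posterior $.5<t$. So a previously disbelieved answer becomes believed and $\Diamond-$ fails; yet the only prior-heavy unions are the first-plus-second ($.8$), first-plus-third ($.7$), and $S$, whose conditional probabilities under $p$ are $.7$, $.8$, and $1$, all at least $t$, so \textsc{stability} still holds. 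The main obstacle throughout is precisely the threshold step in the validity proof, namely making \textsc{stability}, a constraint on priors, do work about conditioning from $E$ to $E\cap p$; this is what the padding argument accomplishes.
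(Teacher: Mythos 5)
Your proof is correct, and at the crucial step it is actually \emph{more careful} than the paper's own. The paper's argument has the same skeleton as yours: characterize $B(E\cap p)$ as the minimal downward-closed-above-threshold subset of $E\cap p$, then show $B(E)\cap p$ satisfies the closure condition (by \textsc{orthogonality}) and the threshold condition (by \textsc{stability}), whence $B(E\cap p)\subseteq B(E)\cap p\subseteq B(E)$. But at the threshold step the paper simply asserts $Pr(B(E)\cap p\mid E\cap p)=Pr(B(E)\mid E\cap p)\geq t$ ``by \textsc{stability}''. As you correctly observe, this is not a direct instance of \textsc{stability}, whose antecedent requires \emph{prior} probability $Pr(\bigcup X)\geq t$, whereas all one has in hand is the posterior fact $Pr(B(E)\mid E)\geq t$; the prior mass of $B(E)$'s answers can be well below $t$ (indeed, in the paper's own $\Box+$ countermodel in this very proposition, $B(\{a,b\})=\{a\}$ has prior $.9<t$). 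Your padding trick---adjoining to $B(E)$'s answers every answer disjoint from $E$, so that under \textsc{orthogonality} the prior mass rises to at least $1-M_E(1-t)\geq t$ while the $E\cap p$-conditional probability is unchanged---supplies exactly the missing bridge, and it transparently uses \textsc{orthogonality} a second time, which is as it should be given the proposition's third claim that \textsc{stability} alone does not suffice. Your two countermodels have the same shape as the paper's (for $\Box+$: singleton answers with priors $.9,.09,.01$, $t=.9001$, $\mathcal{E}=\{S,\{a,b\}\}$; for $\Diamond-$ without \textsc{orthogonality}: $Q=\{A,B,C\}$ with priors $\tfrac12,\tfrac14+\epsilon,\tfrac14-\epsilon$ made uniform by a cross-cutting $E$, $t=\tfrac12+\epsilon$), and both of yours check out. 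One small slip in your $\Box+$ example: the prior-heavy unions are not only $\{s_1,s_2\}$ and $S$ but also $\{s_1,s_3\}$, with prior $.7\geq t=.6$; fortunately $Pr(\{s_1,s_3\}\mid\{s_1,s_2\})=.5/.8=.625\geq .6$, so \textsc{stability} still holds and the countermodel stands---only your justification of it was incomplete.
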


\begin{proof} To see that $\Diamond -$ holds, note that $B(E\cap p)$ is the minimal $X\subseteq E\cap p$ such that (i) if $s \in X$ and $Pr(q|E\cap p)\geq Pr([s]_Q|E\cap p)$ for $q\in Q$, then $q \subseteq X$, and (ii) $Pr(X|E\cap p)\geq t$. Then if $B(E)\cap p \neq \emptyset$, $Pr(B(E)\cap p|E \cap p)=Pr(B(E)|E\cap p)\geq t$ by \textsc{stability}, so $B(E)\cap p$ meets condition (ii). Moreover, it meets condition (i) by \textsc{orthogonality}. So $B(E)\cap p$ contains the minimal $X$ meeting (i) and (ii) as a subset. So $B(E\cap p)\subseteq B(E)\cap p \subseteq B(E)$, as required.


To see how $\Box+$ can fail, let $S= \{a,b,c\}$,  $\mathcal{E}=\{S,\{a,b\}\}$, $Q=\{\{s\}:s\in S\}$,  $Pr(\{a\})=.9$, $Pr(\{b\})=.09$, $Pr(\{c\})=.01$, and $t=.9001$. This structure satisfies \textsc{stability}. $\Box+$ fails, since $B(S)=\{a,b\}\not\subseteq B(\{a,b\})=\{a\}$.

To see how $\Diamond-$ can fail in the absence of \textsc{orthogonality}, consider a probability structure in which $Q=\{A,B,C\}$, $\mathcal{E}=\{S,E\}$, $Pr(A)=\frac{1}{2}, Pr(B)=\frac{1}{4}+\epsilon, Pr(C) = \frac{1}{4}-\epsilon$, $Pr_E(A)=Pr_E(B)=Pr_E(C)=\frac{1}{3}$, and $t=\frac{1}{2}+\epsilon$.  \textsc{stability} hold, but $\Diamond-$ fails: $B(S)\cap E\neq\emptyset$, but  $B(E)=E\not\subseteq B(S)= A\cup B$.  
\end{proof}

\begin{proposition}
$\Box +$, $\Box -$, $\Box R$, $\Diamond R$, and $\Pi -$ are valid for LK-belief on the class of probability structures satisfying \textsc{orthogonality} 
\begin{proof} For $\Diamond R$, see the proof of Proposition 3; For $\Box +$ and $\Box -$ (and hence $\Box R$), see the proof of Proposition 7; for $\Pi -$, see \cite{GoodSaloRev}. 
\end{proof}
\end{proposition}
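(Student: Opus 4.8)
The plan is to exploit two reformulations of LK-belief. First, since $Pr_E([s]_Q)\geq t\,Pr_E(q)$ for all $q\in Q$ is equivalent to $Pr_E([s]_Q)\geq t\cdot\max_{q}Pr_E(q)$, membership in $B_{LK}(E)$ depends on $s$ only through its cell $[s]_Q$; hence $B_{LK}(E)$ is a union of sets of the form $[s]_Q\cap E$, and, taking $s$ in a maximal cell, $B_{LK}(E)$ is always nonempty. Second, I will separate the five principles into those that need \textsc{orthogonality} ($\Diamond R$ and $\Pi -$) and those that do not (the three $\Box$ principles), since this clarifies where the real work lies.

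For the $\Box$ principles, assume $B_{LK}(E)\subseteq p$. The key is a mass-bookkeeping observation: by answer-determinacy, every cell $[s]_Q$ with $s\in B_{LK}(E)$ satisfies $[s]_Q\cap E\subseteq B_{LK}(E)\subseteq p$, so conditioning on $p$ leaves its mass untouched and merely renormalises, giving $Pr_{E\cap p}([s]_Q)=Pr_E([s]_Q)/Pr(p|E)$; meanwhile every answer $q$ has $Pr(q\cap E\cap p)\leq Pr(q\cap E)$, so $Pr_{E\cap p}(q)\leq Pr_E(q)/Pr(p|E)$. For $\Box+$, a member $s$ of $B_{LK}(E)$ therefore still dominates every $q$ after conditioning, since its cell keeps its full (rescaled) mass while competitors can only lose. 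For $\Box-$, I take an arbitrary $s\in B_{LK}(E\cap p)$ and compare it against a cell $q^{*}$ that is maximal given $E$; because $q^{*}\cap E\subseteq B_{LK}(E)\subseteq p$ its mass is preserved, and the inequality $Pr_{E\cap p}([s]_Q)\geq t\,Pr_{E\cap p}(q^{*})$ unwinds, via these two relations, to $Pr_E([s]_Q)\geq t\,Pr_E(q^{*})=t\cdot\max_q Pr_E(q)$, which is exactly $s\in B_{LK}(E)$. Finally $\Box R$ follows from either $\Box-$ or $\Box+$ together with the nonemptiness of $B_{LK}(E\cap p)$.

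For $\Diamond R$ and $\Pi -$ I bring in \textsc{orthogonality}, whose force is that on the cells surviving a given body of evidence the conditional distribution is a fixed rescaling of the prior: $Pr_E(q)=c_E\,Pr(q)$ for every $q$ with $Pr_E(q)>0$. Consequently the LK-test collapses to the evidence-independent comparison $Pr([s]_Q)\geq t\,Pr(q)$, quantified only over cells $q$ that survive $E$; passing from $E$ to $E\cap p$ can only delete cells from this quantifier, so the test only gets easier. For $\Diamond R$, given $B_{LK}(E)\cap p\neq\emptyset$, any believed cell that survives conditioning on $p$ therefore lands in $B_{LK}(E\cap p)$, establishing $B_{LK}(E)\cap B_{LK}(E\cap p)\neq\emptyset$; this is essentially the argument of Proposition 3. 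For $\Pi -$, I choose the cell $p\in\Pi$ that retains a globally $E$-maximal answer $q^{*}$; every $s\in B_{LK}(E\cap p)$ then satisfies $Pr([s]_Q)\geq t\,Pr(q^{*})$, and since $Pr(q^{*})\geq Pr(q)$ for all $q$ surviving $E$, it satisfies the full test for $B_{LK}(E)$.

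I expect the main obstacle to be the selection step shared by $\Diamond R$ and $\Pi -$: one must carry over a believed cell that still has positive probability after conditioning. A single state witnessing $B_{LK}(E)\cap p\neq\emptyset$ need not, in full generality, lie in a cell of positive conditional probability, so I would either invoke the standing assumption implicit in the Proposition 3 proof (where $s\in E\cap p$ is taken to give $Pr_{E\cap p}([s]_Q)\neq 0$) that the relevant nonempty intersections carry positive mass, or else make the maximality choice of cell explicit, as in Proposition 3, to guarantee survival. The $\Box$-direction computations, by contrast, I expect to be routine once the mass-preservation observation is in place.
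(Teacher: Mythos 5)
Your proof is correct, and it is genuinely more self-contained than the paper's, which is a proof by deferral: the paper sends $\Diamond R$ to the proof of Proposition 3 (an argument designed for the global-threshold notion $B$, built around selecting a state of maximal cell-probability in $B(E)\cap p$), sends $\Box +$ and $\Box -$ to Proposition 7 (where they are proved for \emph{all} probability structures, without \textsc{orthogonality} --- matching your observation that the $\Box$ principles don't need it), and sends $\Pi -$ to an external reference without proof. Your $\Box$-arguments are in substance the Proposition 7 argument --- believed cells satisfy $[s]_Q\cap E\subseteq B_{LK}(E)\subseteq p$, so their mass is exactly rescaled by $1/Pr(p|E)$ while any competitor's mass can only shrink --- and your formulation is actually tidier than the paper's, which asserts the slightly-too-strong $[s]_Q\subseteq p$ and whose final case split for $\Box -$ contains a garbled clause; your comparison against a single $E$-maximal cell $q^*$ (which lies in $B_{LK}(E)$ because $t\leq 1$, hence inside $p$, hence mass-preserved) avoids both issues. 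Where you genuinely diverge is the rescaling lemma: under \textsc{orthogonality} the posterior on cells meeting the evidence is a fixed positive multiple $c_E$ of the prior, so the LK-test collapses to the evidence-free comparison $Pr([s]_Q)\geq t\,Pr(q)$ quantified over surviving cells, and this quantifier domain is monotonically shrinking in the evidence. That buys a uniform, selection-free treatment of $\Diamond R$ and, more importantly, an actual proof of $\Pi -$ for LK-belief, which the paper omits entirely.

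One note on your closing worry: it dissolves, and you need neither the "standing assumption" nor the maximality selection. If $t>0$, any $s\in B_{LK}(E)$ satisfies $Pr_E([s]_Q)\geq t\,Pr_E(q)>0$ for some cell $q$ of positive posterior (one must exist), so $Pr([s]_Q)>0$; and since the rescaling constant $c_{E\cap p}$ is strictly positive, \textsc{orthogonality} itself gives $Pr_{E\cap p}([s]_Q)=c_{E\cap p}\,Pr([s]_Q)>0$ for any such $s\in E\cap p$ --- a believed cell meeting the new evidence automatically survives. (If $t=0$ then $B_{LK}(E)=E$ and all five principles are immediate.) The only remaining gaps are ones the paper also glosses: positivity of $Pr(E\cap p)$ throughout, and countability of $\Pi$ when choosing $p\in\Pi$ with $Pr(q^*\cap p)>0$.
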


\begin{proposition}
$\Box +$ and $\Box -$ are valid for LK-Belief on the class of probability structures. $\Diamond R$ and $\Pi -$ can both fail in such structures.
\end{proposition}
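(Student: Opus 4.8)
The plan is to first rewrite LK-belief so that the comparison with every answer collapses into a single inequality. Writing $M_E:=\sup_{q\in Q}Pr_E(q)$ for the probability of the most likely answer given $E$, we have $s\in B_{LK}(E)$ iff $s\in E$ and $Pr_E([s]_Q)\geq t\cdot M_E$. The argument for both $\Box$ principles then reduces to two elementary facts about conditioning on a further proposition $p$, writing $c:=Pr(E)/Pr(E\cap p)\geq 1$: for every $q\in Q$ we have $Pr_{E\cap p}(q)\leq c\cdot Pr_E(q)$, and this holds with equality precisely when $q\cap E\subseteq p$.

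Next I would exploit the hypothesis $B_{LK}(E)\subseteq p$ to compute $M_{E\cap p}$ exactly. Any answer $q^{*}$ with $Pr_E(q^{*})=M_E$ satisfies $q^{*}\cap E\subseteq B_{LK}(E)\subseteq p$, since each $s\in q^{*}\cap E$ has $Pr_E([s]_Q)=M_E\geq t\cdot M_E$; so by the equality case $Pr_{E\cap p}(q^{*})=c\cdot M_E$, and combined with the upper bound $Pr_{E\cap p}(q)\leq c\cdot Pr_E(q)\leq c\cdot M_E$ for all $q$ this yields $M_{E\cap p}=c\cdot M_E$. (If the supremum is not attained, the same conclusion follows by running this through a maximizing sequence of answers, each of which eventually lands inside $p$ once $t<1$.) Both $\Box$ principles then fall out. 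For $\Box-$: if $s\in B_{LK}(E\cap p)$ then $Pr_{E\cap p}([s]_Q)\geq t\cdot M_{E\cap p}=tc\cdot M_E$, and since $Pr_{E\cap p}([s]_Q)\leq c\cdot Pr_E([s]_Q)$ we get $Pr_E([s]_Q)\geq t\cdot M_E$, so $s\in B_{LK}(E)$. For $\Box+$: if $s\in B_{LK}(E)$ then $s\in p$ and $[s]_Q\cap E\subseteq p$, so $Pr_{E\cap p}([s]_Q)=c\cdot Pr_E([s]_Q)\geq tc\cdot M_E=t\cdot M_{E\cap p}$, giving $s\in B_{LK}(E\cap p)$. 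Note that \textsc{orthogonality} is never used.

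For the two failure claims I would display explicit structures violating \textsc{orthogonality}. A compact counterexample to $\Diamond R$ is $S=\{a,b,c\}$, $Q=\{\{a,b\},\{c\}\}$, $\mathcal{E}=\{S,\{b,c\}\}$, $Pr(\{a\})=.5$, $Pr(\{b\})=.1$, $Pr(\{c\})=.4$, $t=.7$: one checks $B_{LK}(S)=\{a,b\}$ and $B_{LK}(\{b,c\})=\{c\}$, so with $p=\{b,c\}$ the antecedent $B_{LK}(S)\cap p=\{b\}\neq\emptyset$ holds while $B_{LK}(S)\cap B_{LK}(S\cap p)=\emptyset$; this is a miniature of \textbf{Drawing a Card}. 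For $\Pi-$ I would reuse the structure from the proof of Proposition~4 (with $t=.65$): a direct computation gives $B_{LK}(S)=\{s_1,s_2,s_3,s_4\}$ (coinciding there with belief on our account), while $B_{LK}(\{s_1,s_3,s_5\})=\{s_1,s_3,s_5\}$ and $B_{LK}(\{s_2,s_4,s_6\})=\{s_2,s_4,s_6\}$, neither contained in $B_{LK}(S)$, so $\Pi-$ fails for the partition $\Pi=\{\{s_1,s_3,s_5\},\{s_2,s_4,s_6\}\}$.

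The main obstacle is the identity $M_{E\cap p}=c\cdot M_E$: everything turns on the observation that $B_{LK}(E)\subseteq p$ forces a maximal answer to sit inside $p$, so that its conditional probability scales by exactly the factor $c$ that simultaneously bounds every other answer from above. Once this is secured the $\Box$ principles are immediate and, tellingly, need no \textsc{orthogonality}; the only real care in the counterexamples is choosing the threshold in the narrow window where conditioning reverses the ordering of the answers.
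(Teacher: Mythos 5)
Your proposal is correct. For the validity of $\Box +$ and $\Box -$, your route is essentially the paper's argument in different packaging: the paper works pointwise with ratios, showing $\frac{Pr([s]_Q|E\cap p)}{Pr(q|E\cap p)}\geq \frac{Pr([s]_Q|E)}{Pr(q|E)}$ when $[s]_Q\cap E\subseteq p$, and for the $\Box -$ direction picks a witnessing answer $q$ with $q\cap B_{LK}(E)\neq\emptyset$ (hence $q\cap E\subseteq p$) so that its probability scales exactly; your reformulation via $M_E=\sup_q Pr_E(q)$ and the identity $M_{E\cap p}=c\cdot M_E$ isolates the same two facts (scaling by at most $c$, with exact scaling for answers inside $p$) into one clean lemma, which is arguably tidier. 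Two small points of slack, neither fatal: your ``equality precisely when $q\cap E\subseteq p$'' should be ``if'' rather than ``precisely when'' (equality also holds when $q\cap E\setminus p$ is null), though you only use the ``if'' direction; and your maximizing-sequence parenthetical is unnecessary, since for any probability distribution only finitely many cells of $Q$ can have probability exceeding any positive bound, so the supremum is attained whenever $M_E>0$ (this also disposes of the $t=1$ edge case you flag as needing $t<1$). Where you genuinely diverge from the paper is the failure claims: the paper derives the failures of $\Diamond R$ and $\Pi -$ indirectly, from the failure of $\Pi R$ in \textbf{Drawing a Card v.2} described in the main text (using that $B_{LK}$ is nonempty, a $\Pi R$ failure yields both), whereas you exhibit explicit countermodels --- a three-state model for $\Diamond R$ mirroring \textbf{Drawing a Card}, and a reuse of the six-state structure from the proof of Proposition~4 for $\Pi -$. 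I have checked both computations ($B_{LK}(S)=\{a,b\}$, $B_{LK}(\{b,c\})=\{c\}$ with $t=.7$; and $B_{LK}(S)=\{s_1,s_2,s_3,s_4\}$ with each partition cell mapping to itself under revision) and they are right; your version is self-contained and directly verifiable, at the cost of not exhibiting the conceptual link to the $\Pi R$ discussion that the paper's comparison with its own theory turns on.
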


\begin{proof}
The failures of $\Diamond R$ and $\Pi -$ follow from the failure of $\Pi R$ described in the main text.

Suppose that $B_{LK}(E)\subseteq p$. We will show that $B_{LK}(E\cap p)=B_{LK}(E)$, thus establishing $\Box +$ and $\Box -$.

Since $B_{LK}(E)\subseteq p$, we have that if $s \in B_{LK}(E)$, then $[s]_Q\subseteq p$. So if $s \in B_{LK}(E)$ then $\frac{Pr([s]_Q|E\cap p)}{Pr(q|E\cap p)}\geq \frac{Pr([s]_Q|E)}{Pr(q|E)}$ for any $q\in Q$ such that $Pr(q|E\cap p)>0$. So if $s\in B_{LK}(E)$, then for any $q\in Q$ with $Pr(q|E\cap p)>0$, $\frac{Pr([s]_Q|E\cap p)}{Pr(q|E\cap p)}\geq \frac{Pr([s]_Q|E)}{Pr(q|E)} \geq t$. And if $Pr(q|E\cap p)=0$, then trivially $Pr([s]_Q|E)\geq t\times Pr(q|E\cap p)$. So $s\in B_{LK}(E \cap p)$. 

Moreover, if $s\notin  B_{LK}(E)$, then $t>0$ and there is a $q\in Q$ such that (i) $Pr(q|E) \times t > Pr([s]_Q|E)$ and (ii) $q\cap B_{LK}(E) \neq \emptyset$. Assuming $Pr([s]_Q|E \cap p)>0$ then, by the above, $\frac{Pr(q|E\cap p)}{Pr([s]_Q|E \cap p)} \geq \frac{Pr(q|E)}{Pr([s]_Q|E)} > \frac{1}{t}$. In that case $s\notin B_{LK}(E\cap p)$. And if $Pr([s]_Q|E \cap p)>0$ then also $t\times Pr(q|E\cap p)> \times Pr([s]_Q|E \cap p)$. So in that case also $s\notin B_{LK}(E\cap p)$.

So $B_{LK}(E\cap p)=B_{LK}(E)$, as required.


\end{proof}

\typeout{}

\bibliographystyle{eptcs} 
\bibliography{Bibliography}

\end{document}